\newtheorem{problem}{\textbf{Problem}}
\newtheorem{definition}{\textbf{Definition}}
\newtheorem{theorem}{\rm\textbf{Theorem}}
\newtheorem{remark}{\rm\textbf{Remark}}
\newtheorem{example}{\rm\textbf{Example}}
\begin{document}
%
\title{Rule-based Evaluation and Optimal Control for Autonomous Driving}
%
%
%

\author{Wei Xiao,~\IEEEmembership{Member,~IEEE}, Noushin Mehdipour, ~\IEEEmembership{Member,~IEEE}, Anne Collin, Amitai Y. Bin-Nun, Emilio Frazzoli,~\IEEEmembership{Fellow, IEEE}, Radboud Duintjer Tebbens, Calin Belta~\IEEEmembership{Fellow, IEEE}

\thanks{Wei Xiao is with Boston University {\tt xiaowei@bu.edu}. This work was performed while he was an intern at Motional.}
	\thanks{Noushin Mehdipour, Anne Collin, Amitai Y. Bin-Nun, and Radboud Duintjer Tebbens are with Motional {\tt \{noushin.mehdipour, anne.collin, amitai.binnun, radboud.tebbens\}@motional.com}
	}
	\thanks{Emilio Frazzoli is with Motional and with ETH Zurich {\tt emilio.frazzoli@motional.com}
	}
	\thanks{Calin Belta is with Boston University and with  Motional {\tt calin.belta@motional.com}
	}
	\thanks{Some results from this paper appeared in the Proceedings of the 2021 ACM/IEEE International Conference on Cyber-Physical Systems (ICCPS) \cite{Xiao2021ICCPS}.}
}

\maketitle


\vspace{2mm}

\begin{abstract}
We develop optimal control strategies for autonomous vehicles (AVs) that are required to meet complex specifications imposed as rules of the road (ROTR) and locally specific cultural  expectations of reasonable driving behavior. We formulate these specifications as rules, and specify their priorities by constructing a priority structure,  called \underline{T}otal \underline{OR}der over e\underline{Q}uivalence classes (TORQ). We propose a recursive framework, in which the satisfaction of the rules in the priority structure are iteratively relaxed in reverse order of priority. 

Central to this framework is an optimal control problem, where convergence to desired states is achieved using Control Lyapunov Functions (CLFs) and clearance with other road users is enforced through Control Barrier Functions (CBFs). 
We present offline and online approaches to this problem. In the latter, the AV has limited sensing range that affects the activation of the rules, and the control is generated using a receding horizon (Model Predictive Control, MPC) approach.  We also show how the offline method can be used for after-the-fact (offline) pass/fail evaluation of trajectories - a given trajectory is rejected if we can find a controller producing a trajectory that leads to less violation of the rule priority structure. We present case studies with multiple driving scenarios to demonstrate the effectiveness of the algorithms, and to compare the offline and online versions of our proposed framework. 
\end{abstract}

\begin{IEEEkeywords}
Autonomous driving, Lyapunov methods, Safety, Priority Structure.
\end{IEEEkeywords}

%
\IEEEpeerreviewmaketitle

\section{INTRODUCTION}

\label{sec:intro}

With the development and integration of cyber physical and safety critical systems in various engineering disciplines, there is an increasing need for computational tools for verification and control of such systems according to rich and complex specifications. A prominent example is autonomous driving, which received a lot of attention during the last decade. 
Besides common objectives in optimal control problems, such as minimizing the energy consumption and travel time, and constraints on control variables, such as maximum acceleration, autonomous vehicles (AVs) should follow complex rules of the road (ROTR) with different priorities. 
They should also meet cultural expectations of reasonable driving behavior \cite{nolte2017towards,shalev2017formal,parseh2019pre,ulbrich2013probabilistic,qian2014priority,iso2019pas,Collin2020}. 
For example, an AV should avoid collisions with other road users (higher priority), maintain longitudinal clearance with the lead car (lower priority), and drive faster than some minimum speed limit (still lower priority).
Inspired by \cite{Censi2020}, we formulate these behavior specifications as a set of rules with a priority structure that captures their importance \cite{Censi2020}. 

To accommodate the rules and the priority structure, we formulate the AV control problem as an optimal control problem, in which the satisfaction of the rules and some vehicle limitations are enforced by Control Barrier Functions (CBF) \cite{Ames2017}. To minimize a global notion of rule violation, we formulate iterative rule relaxations according to their priorities. We propose offline and online versions of the control problem, and in both cases require the AV to follow a reference trajectory. In the offline version, trajectory tracking is achieved through additional constraints implemented using Control Lyapunov Functions (CLF)\cite{Freeman1996}. In online control, we track the reference trajectory by including the tracking error in the cost, and by performing optimization over a receding horizon. 

Control Lyapunov functions \cite{Freeman1996,Artstein1983} have been used to stabilize systems to desired states. CBFs enforce set forward-invariance \cite{Tee2009,Wisniewski2013}, and have been adopted to guarantee the satisfaction of safety specifications \cite{Ames2017,wang2017safety,Lindemann2018}. In \cite{Ames2017,Glotfelter2017}, the constraints induced by CBFs and CLFs were used to formulate quadratic programs (QPs) that could be solved in real time to stabilize affine control systems while optimizing quadratic costs and satisfying state and control constraints. The main limitation of this approach is that the resulting QPs can easily become infeasible, especially when bounds on control inputs are imposed in addition to the safety specifications and the state constraints, or for constraints with high relative degree \cite{Xiao2019}. Relaxations of the (hard) CLF \cite{Aaron2012,Ames2017}
and CBF \cite{Xiao2019} constraints 
have been proposed to address this issue. 

The approaches described above do not consider the (relative) importance of the constraints during their relaxations. With particular relevance to the application considered here, AVs often deal with situations where they cannot satisfy all the ROTR or expectations of reasonable driving behavior. For instance, consider a scenario where a pedestrian walks to the lane in which the AV is driving - it is impossible for the AV to avoid a collision with the pedestrian or another vehicles, stay in lane, and drive faster than the minimum speed limit at the same time. 
Given the relative priorities of these driving rules, a reasonable AV behavior would be to avoid a collision with the pedestrian or other vehicles (high priority), even if doing so would violate low or medium priority rules, e.g., by reducing speed to a value lower than the minimum speed limit, and/or changing lane if the adjacent lane is free. The maximum satisfaction and minimum violation of a set of rules expressed in temporal logic were studied in \cite{dimitrova2018maximum,tuumova2013minimum} and solved by assigning positive numerical weights to formulas based on their priorities \cite{tuumova2013minimum,wSTL}. In \cite{Censi2020}, the authors proposed \emph{rulebooks}, a framework in which relative priorities were captured by a pre-order. In conjunction with rule violation scores, rulebooks were used to rank vehicle trajectories. These works do not take the vehicle dynamics into account, or assume very simple forms, such as finite transition systems. The violation scores are example - specific, or are simply the quantitative semantics of the logic used to formulate the rules. In their current form, they capture worst case scenarios and are non-differentiable, and   cannot be used for generating controllers for realistic vehicle dynamics.

In this paper, we draw inspiration from Signal Temporal Logic (STL) \cite{Maler2004} to formalize ROTR and other driving rules and to quantify the degree of violation of the rules. We build on the rulebooks from \cite{Censi2020} to construct a specific type of  priority structure, called \underline{T}otal \underline{OR}der over e\underline{Q}uivalence classes (TORQ). The main contribution of this paper is the development of offline and online iterative procedures that use TORQ to determine control strategies that minimize rule violation globally.
The online version can activate and disactivate rules depending on local sensing of relevant traffic participants or features (e.g., parked cars, pedestrians, road dividers). 
We show how the offline procedure can be adapted to develop transparent and reproducible rule-based pass/fail evaluation of AV trajectories in test scenarios. 
Central to these approaches is an optimization problem based on \cite{Xiao2019}, which uses detailed vehicle dynamics, ensures the satisfaction of ``hard" vehicle limitations (e.g., acceleration constraints), and can accommodate rule constraints with high relative degree. Another key contribution of this work is the formal definition of a speed dependent, optimal over-approximation of a vehicle footprint that ensures differentiability of clearance-type rules, which enables the use of powerful approaches based on CBF and CLF. Finally, we test the proposed framework in various traffic scenarios using a novel user-friendly software tool.


\section{PRELIMINARIES}
\label{sec:pre}

\subsection{Vehicle Dynamics}
\label{sec:vd}

Consider an affine control system given by:\vspace{-3pt}
\begin{equation}\label{eqn:affine}
\dot{\bm{x}}=f(\bm x)+g(\bm x)\bm u, 
\vspace{-3pt}
\end{equation}
where $\bm x\in X\subset\mathbb{R}^{n}$ ($X$ is the state constraint set), $\dot{()}$ denotes differentiation with respect to time, 
$f:\mathbb{R}^{n}\rightarrow\mathbb{R}^{n}$
and $g:\mathbb{R}^{n}\rightarrow\mathbb{R}^{n\times q}$ are globally
Lipschitz, and $\bm u\in U\subset\mathbb{R}^{q}$, where $U$ is the control constraint set
defined as:
\begin{equation}
U:=\{\bm u\in\mathbb{R}^{q}:\bm u_{min}\leq\bm u\leq\bm u_{max}\},
\label{eqn:control}%
\end{equation}
with $\bm u_{min},\bm u_{max}\in\mathbb{R}^{q}$, and the inequalities are
interpreted componentwise. We use $\bm{x}(t)$ to refer to a trajectory of (\ref{eqn:affine}) at a specific time $t$, and we use $\mathcal{X}$ to denote a whole trajectory starting at time 0 and ending at a final time specified by a scenario.  
Note that most vehicle dynamics, such as traditional dynamics defined with respect to an inertial frame \cite{Ames2017} and dynamics defined along a given reference trajectory \cite{Rucco2015} (see (\ref{eqn:vehicle})) are in the form (\ref{eqn:affine}). Throughout the paper, we will refer to the vehicle with dynamics given by (\ref{eqn:affine}) as {\em ego}.

\begin{definition}
\label{def:forwardinv}(\textit{Forward invariance} \cite{Nguyen2016}) A set $C\subset\mathbb{R}^{n}$ is forward invariant for
system (\ref{eqn:affine}) if $\bm x(0) \in C$ implies $\bm x(t)\in C,$ $\forall t\geq0$.
\end{definition}

\begin{definition}
\label{def:relative} (\textit{Relative degree} \cite{Nguyen2016}) The relative degree of a
(sufficiently many times) differentiable function $b:\mathbb{R}^{n}%
\rightarrow\mathbb{R}$ with respect to system (\ref{eqn:affine}) is the number
of times it needs to be differentiated along its dynamics (Lie derivatives) until the control
$\bm u$ explicitly shows in the corresponding derivative.
\end{definition}

In this paper, since function $b$ is used to define a constraint $b(\bm
x)\geq0$, we will also refer to the relative degree of $b$ as the relative
degree of the constraint. 

\subsection{High Order Control Barrier Functions}
\begin{definition}
\label{def:classk} (\textit{Class $\mathcal{K}$ function} \cite{Khalil2002}) A
continuous function $\alpha:[0,a)\rightarrow[0,\infty), a > 0$ is said to
belong to class $\mathcal{K}$ if it is strictly increasing and $\alpha(0)=0$.
\end{definition}
Given $b:\mathbb{R}^{n}\rightarrow\mathbb{R}$ and a constraint $b(\bm x)\geq0$ with relative
degree $m$, we define $\psi_{0}(\bm
x):=b(\bm x)$ and a sequence of functions $\psi_{i}:\mathbb{R}%
^{n}\rightarrow\mathbb{R},i\in\{1,\dots,m\}$:
\vspace{-2pt}
\begin{equation}
\begin{aligned} \psi_i(\bm x) := \dot \psi_{i-1}(\bm x) + \alpha_i(\psi_{i-1}(\bm x)),i\in\{1,\dots,m\}, \end{aligned} \label{eqn:functions}%
\end{equation}
where $\alpha_{i}(\cdot),i\in\{1,\dots,m\}$ denotes a $(m-i)^{th}$ order
differentiable class $\mathcal{K}$ function. We further define a sequence of sets $C_{i}, i\in\{1,\dots,m\}$ associated
with (\ref{eqn:functions}) in the following form:
\begin{equation}
\label{eqn:sets}\begin{aligned} C_i := \{\bm x \in \mathbb{R}^n: \psi_{i-1}(\bm x) \geq 0\}, i\in\{1,\dots,m\}. \end{aligned}
\end{equation}

\begin{definition}
\label{def:hocbf} (\textit{High Order Control Barrier Function (HOCBF)}
\cite{Xiao2019}) Let $C_{1}, \dots, C_{m}$ be defined by (\ref{eqn:sets}%
) and $\psi_{1}(\bm x), \dots, \psi_{m}(\bm x)$ be defined by
(\ref{eqn:functions}). A function $b: \mathbb{R}^{n}\rightarrow\mathbb{R}$ is
a High Order Control Barrier Function (HOCBF) of relative degree $m$ for
system (\ref{eqn:affine}) if there exist $(m-i)^{th}$ order differentiable
class $\mathcal{K}$ functions $\alpha_{i},i\in\{1,\dots,m-1\}$ and a class
$\mathcal{K}$ function $\alpha_{m}$ such that 
\begin{equation}
\label{eqn:constraint}\begin{aligned} \sup_{\bm u\in U}[L_f^{m}b(\bm x) + L_gL_f^{m-1}b(\bm x)\bm u + S(b(\bm x)) \\+ \alpha_m(\psi_{m-1}(\bm x))] \geq 0, \end{aligned}
\end{equation}
for all $\bm x\in C_{1} \cap,\dots, \cap C_{m}$. 
$L_{f}^{m}$ ($L_{g}$) denotes Lie derivatives along
$f$ ($g$) $m$ (one) times, and $S(\cdot)$ denotes the remaining Lie derivatives
along $f$ with degree less than or equal to $m-1$ (see \cite{Xiao2019} for more details).
\end{definition}

The HOCBF is a general form of the relative degree $1$ CBF \cite{Ames2017},
\cite{Glotfelter2017}, \cite{Lindemann2018} (setting $m=1$ reduces the HOCBF to
the common CBF form in \cite{Ames2017}, \cite{Glotfelter2017}, \cite{Lindemann2018}), and is also a general form of the exponential CBF
\cite{Nguyen2016}.

\begin{theorem}
\label{thm:hocbf} (\cite{Xiao2019}) Given a HOCBF $b(\bm x)$ from Def.
\ref{def:hocbf} with the associated sets $C_{1}, \dots, C_{m}$ defined
by (\ref{eqn:sets}), if $\bm x(0) \in C_{1} \cap,\dots,\cap C_{m}$,
then any Lipschitz continuous controller $\bm u(t)$ that satisfies
(\ref{eqn:constraint}) $\forall t\geq0$ renders $C_{1}\cap,\dots,
\cap C_{m}$ forward invariant for system (\ref{eqn:affine}).
\end{theorem}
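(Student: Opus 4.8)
The plan is to show that the single differential inequality encoded by (\ref{eqn:constraint}) propagates, via a comparison-lemma argument, down the chain $\psi_{m-1},\psi_{m-2},\dots,\psi_{0}$, forcing each $\psi_{i-1}(\bm x(t))$ to stay nonnegative and hence each set $C_{i}$ to be forward invariant; the proof mirrors the one in \cite{Xiao2019}. First I would observe that, by the construction (\ref{eqn:functions}), the bracketed quantity in (\ref{eqn:constraint}) is exactly $\dot\psi_{m-1}(\bm x)+\alpha_{m}(\psi_{m-1}(\bm x))$ evaluated along the closed-loop dynamics $\dot{\bm x}=f(\bm x)+g(\bm x)\bm u(t)$, since $L_f^{m}b(\bm x)+L_gL_f^{m-1}b(\bm x)\bm u+S(b(\bm x))=\dot\psi_{m-1}(\bm x)$ along trajectories. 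Because $\bm u(t)$ is Lipschitz continuous and $f,g$ are globally Lipschitz, the closed-loop vector field is Lipschitz, so the trajectory $\bm x(t)$ through $\bm x(0)$ exists and is unique; a controller satisfying (\ref{eqn:constraint}) for all $t\ge 0$ therefore yields $\dot\psi_{m-1}(\bm x(t))\ge-\alpha_{m}(\psi_{m-1}(\bm x(t)))$ along this trajectory.

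Next I would run the induction downward. For the base case set $\phi(t):=\psi_{m-1}(\bm x(t))$, which satisfies $\dot\phi(t)\ge-\alpha_{m}(\phi(t))$ with $\phi(0)=\psi_{m-1}(\bm x(0))\ge 0$ since $\bm x(0)\in C_{m}$. Let $\mu(t)$ solve the scalar problem $\dot\mu=-\alpha_{m}(\mu)$, $\mu(0)=\phi(0)\ge 0$; because $\alpha_{m}$ is class $\mathcal{K}$ we have $\alpha_{m}(0)=0$, so $\mu\equiv 0$ when $\phi(0)=0$ and $\mu(t)$ is nonincreasing and bounded below by $0$ when $\phi(0)>0$, hence $\mu(t)\ge 0$ for all $t$. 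The comparison lemma (e.g., \cite{Khalil2002}) then gives $\phi(t)\ge\mu(t)\ge 0$, i.e., $\psi_{m-1}(\bm x(t))\ge 0$ and $\bm x(t)\in C_{m}$ for all $t\ge 0$. For the inductive step, assume $\psi_{i}(\bm x(t))\ge 0$ for all $t\ge 0$; by (\ref{eqn:functions}) this reads $\dot\psi_{i-1}(\bm x(t))\ge-\alpha_{i}(\psi_{i-1}(\bm x(t)))$, and since $\bm x(0)\in C_{i}$ gives $\psi_{i-1}(\bm x(0))\ge 0$, the same comparison argument with $\alpha_{i}$ in place of $\alpha_{m}$ yields $\psi_{i-1}(\bm x(t))\ge 0$, i.e., $\bm x(t)\in C_{i}$, for all $t\ge 0$. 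Iterating from $i=m-1$ down to $i=1$ (using that $\bm x(0)$ lies in every $C_{j}$) produces $\psi_{0}(\bm x(t))=b(\bm x(t))\ge 0$ together with $\bm x(t)\in C_{i}$ for every $i\in\{1,\dots,m\}$, so $C_{1}\cap\dots\cap C_{m}$ is forward invariant.

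I expect the main obstacle to be the careful justification of the comparison step rather than the induction bookkeeping: one must ensure the scalar ODE $\dot\mu=-\alpha_{i}(\mu)$ is well posed and that its solutions with nonnegative initial data remain nonnegative and exist globally. This is precisely where the class $\mathcal{K}$ property ($\alpha_{i}(0)=0$, strict monotonicity) is used; if $\alpha_{i}$ is not assumed locally Lipschitz one invokes a comparison principle that does not require uniqueness of the scalar solution. The differentiability assumptions ($(m-i)^{th}$ order differentiability of $\alpha_{i}$) enter here as well, since they are what guarantees that each $\psi_{i}$ is differentiable so that $\dot\psi_{i}$ is meaningful in (\ref{eqn:functions}).

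A secondary technical point concerns the domain of the $\alpha_{i}$: class $\mathcal{K}$ functions are only defined on $[0,a)$, so strictly speaking $\alpha_{i}(\psi_{i-1})$ is defined only where $\psi_{i-1}\ge 0$. One handles this either by restricting attention to the set where the $\psi_{i-1}$ are nonnegative — which is exactly the set whose invariance we are proving, so a standard first-exit-time bootstrap closes the loop — or by extending each $\alpha_{i}$ to an extended class $\mathcal{K}$ function on all of $\mathbb{R}$; in both cases the argument above applies verbatim. Finally, forward completeness of $\bm x(t)$ on $[0,\infty)$ is not an issue because the global Lipschitz assumptions on $f,g$ together with $\bm u(t)\in U$ bounded make the closed-loop vector field globally Lipschitz, so solutions extend to all $t\ge 0$.
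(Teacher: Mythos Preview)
The paper does not actually prove this theorem: it is stated in the Preliminaries section as a cited result from \cite{Xiao2019}, with no proof given. Your proposal is correct and is precisely the standard argument from that reference --- identify the bracketed expression in (\ref{eqn:constraint}) as $\dot\psi_{m-1}+\alpha_m(\psi_{m-1})$ along closed-loop trajectories, apply the comparison lemma to get $\psi_{m-1}(\bm x(t))\ge 0$, then cascade the same step downward through (\ref{eqn:functions}) to conclude $\psi_{i-1}(\bm x(t))\ge 0$ for all $i$. Your remarks on the technical subtleties (well-posedness of the scalar ODE, the need to extend $\alpha_i$ to an extended class~$\mathcal{K}$ function or argue via a first-exit-time bootstrap, and forward completeness from the global Lipschitz hypotheses) are exactly the points one must check, and you handle them appropriately.
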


\begin{definition}\label{def:clf} \textit{(Control Lyapunov Function (CLF) \cite{Aaron2012})} 
A continuously differentiable function $V :\mathbb{R}^{n}\rightarrow\mathbb{R}_{\ge0}$ is an exponentially stabilizing control Lyapunov function (CLF) if there exist positive constants $c_1 >0, c_2 > 0, c_3 > 0$ such that $\forall
\bm x\in X$, $c_{1}||\bm x||^{2} \leq V(\bm x)
\leq c_{2} ||\bm x||^{2}$, the following holds:
\begin{equation}\label{eqn:CLF}
\inf_{\bm u\in U} \lbrack L_{f}V(\bm x)+L_{g}V(\bm x)
\bm u + c_{3}V(\bm x)\rbrack\leq0.
\end{equation}
\end{definition} 

\begin{theorem} [\cite{Aaron2012}] \label{thm:clf}
	 Given a CLF as in Def. \ref{def:clf}, any Lipschitz continuous controller $ \bm u(t),\forall t\geq 0$ that satisfies (\ref{eqn:CLF}) 
	exponentially stabilizes system (\ref{eqn:affine}) to the origin.
\end{theorem}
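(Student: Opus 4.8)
The plan is to run the classical Lyapunov/comparison argument: use the CLF decay inequality (\ref{eqn:CLF}) to force $V$ to decrease exponentially along closed-loop trajectories, and then transfer that decay to the state norm via the quadratic sandwich bounds in Def.~\ref{def:clf}. This is the standard proof (as in \cite{Aaron2012,Khalil2002}); the contribution of the step is mostly bookkeeping.

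First I would fix a Lipschitz continuous controller $\bm u(t)$ that satisfies (\ref{eqn:CLF}) for all $t \ge 0$ and substitute it into (\ref{eqn:affine}). Since $f$ and $g$ are globally Lipschitz and $\bm u(\cdot)$ is Lipschitz continuous, the closed-loop right-hand side $f(\bm x) + g(\bm x)\bm u(t)$ is locally Lipschitz in $\bm x$ and continuous in $t$, so from any $\bm x(0) \in X$ there is a unique absolutely continuous solution $\bm x(t)$. Because $V$ is continuously differentiable, $t \mapsto V(\bm x(t))$ is differentiable and the chain rule gives $\dot V(\bm x(t)) = L_f V(\bm x(t)) + L_g V(\bm x(t))\bm u(t)$. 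Invoking the defining property of the controller, namely that it realizes the constraint in (\ref{eqn:CLF}), yields $\dot V(\bm x(t)) \le -c_3 V(\bm x(t))$ for all $t \ge 0$. Multiplying by the integrating factor $e^{c_3 t}$ gives $\frac{d}{dt}\bigl(e^{c_3 t} V(\bm x(t))\bigr) \le 0$, hence (comparison lemma for this scalar linear differential inequality) $V(\bm x(t)) \le V(\bm x(0))\, e^{-c_3 t}$ for all $t \ge 0$.

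Finally I would use the bounds $c_1\|\bm x\|^2 \le V(\bm x) \le c_2\|\bm x\|^2$: the lower bound gives $\|\bm x(t)\|^2 \le V(\bm x(t))/c_1$, the upper bound at $t=0$ gives $V(\bm x(0)) \le c_2\|\bm x(0)\|^2$, and combining these with the exponential decay of $V$ yields $\|\bm x(t)\| \le \sqrt{c_2/c_1}\,\|\bm x(0)\|\, e^{-(c_3/2)t}$, which is precisely exponential stability of the origin (note the same bounds make $V$ positive definite, so the origin is an equilibrium of interest). The only point that needs care — the ``main obstacle,'' such as it is — is the regularity/completeness accounting: one must ensure the closed-loop solution is defined on all of $[0,\infty)$ so that the chain rule and the differential inequality are legitimate for every $t \ge 0$; this follows because the decay of $V$ keeps $\bm x(t)$ bounded, precluding finite-escape time. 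Everything else reduces to the comparison lemma and elementary algebra.
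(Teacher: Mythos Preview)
Your argument is correct and is exactly the standard exponential-CLF proof: derive $\dot V \le -c_3 V$ along closed-loop trajectories, apply the comparison lemma to get $V(\bm x(t)) \le e^{-c_3 t}V(\bm x(0))$, and sandwich with the quadratic bounds to conclude $\|\bm x(t)\| \le \sqrt{c_2/c_1}\,e^{-(c_3/2)t}\|\bm x(0)\|$. Note, however, that the paper does not actually supply its own proof of this theorem; it is quoted as a preliminary result from \cite{Aaron2012}, so there is nothing in the paper to compare against beyond the citation itself.
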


\subsection{Quadratic Programming (QP) Approach}
\label{sec:qp-app}

Recent works \cite{Ames2017}, \cite{Lindemann2018}, \cite{Nguyen2016} combine CBFs and CLFs with quadratic costs to formulate an optimization problem that stabilizes a system using CLFs subject to safety constraints given by CBFs. Time is discretized and CBF and CLF constraints are considered at each discrete time step.  These constraints are linear in control since the state value is fixed at the beginning of the discretization interval. Therefore, in every interval, the optimization problem is a QP. The optimal control obtained by solving each QP is applied at the current time step and held constant for the whole interval. The next state is found by integrating  (\ref{eqn:affine}). The inter-sampling effect and the case of systems with unknown dynamics are solved in \cite{Xiao2021CDC}. The usefulness of this approach is conditioned upon the feasibility of the QP at every time step. In the case of constraints with high relative degrees, the CBFs can be replaced by HOCBFs \cite{Xiao2019}.

\section{PROBLEM FORMULATION AND APPROACH}
\label{sec:prob}

For a vehicle with dynamics (\ref{eqn:affine}) and starting at ${\bm x}(0)=\bm x_0$, consider an optimal control problem in the form:
\vspace{-3pt}
\begin{equation}\label{eqn:gcost}
\min_{\bm u(t)} \int_{0}^{T}J(||\bm u(t)||)dt,
\end{equation}
where $||\cdot||$ denotes the 2-norm of a vector, $T > 0$ denotes a bounded final time, and $J$ is a strictly increasing function of its argument (e.g., an energy consumption function $J(||\bm u(t)||) = ||\bm u(t)||^2$). We consider the following additional requirements:

\textbf{Trajectory tracking}: We require the vehicle to stay as close as possible to a desired {\em reference trajectory} $\mathcal{X}_r$ (e.g., middle of its lane).

\textbf{State constraints}: We impose a set of constraints (componentwise) on the state of system (\ref{eqn:affine}) in the following form:
\begin{equation}\label{eqn:state}
\bm x_{min} \leq \bm x(t)\leq \bm x_{max}, \forall t\in[0,T],
\end{equation}
where $\bm x_{max}: = (x_{max,1},x_{max,2},\dots,x_{max,n})\in \mathbb{R}^n$ and $\bm x_{min}: = (x_{min,1},x_{min,2},\dots,x_{min,n})\in \mathbb{R}^n$ denote the maximum and minimum state vectors, respectively. Examples of such constraints for a vehicle include maximum acceleration, maximum braking, and maximum steering rate. 
\textbf{Priority structure:}  We require the system trajectory $\mathcal{X}$ of (\ref{eqn:affine}) starting at $\bm x(0)=\bm x_0$ to satisfy a priority structure $\langle R,\sim_p,\leq_p\rangle$, i.e.:
\begin{equation}\label{eqn:rulebook-sat}
\mathcal{X}\models \langle R,\sim_p,\leq_p\rangle, 
\end{equation}
where $\sim_p$ is an equivalence relation over a finite set of rules $R$ and $\leq_p$ is a total order over the equivalence classes. 
For each rule from $R$, there exists a violation metric, which for a given trajectory gives a violation score that quantifies how much the trajectory violates the rule.
Our priority structure, called \underline{T}otal \underline{OR}der over e\underline{Q}uivalence classes (TORQ), is related to the rulebooks from \cite{Censi2020}. However, rather than allowing for a partial order over the set of rules $R$, we require that any two rules are either comparable or equivalent. 
A formal definition for a priority structure and its satisfaction will be given in Sec. \ref{sec:prio-struc}. 

\textbf{Control bounds}: We impose control bounds as given in (\ref{eqn:control}). Examples include jerk and steering acceleration.  

Formally, we can define the optimal control problem as follows:

\vspace{1mm}
\begin{problem}\label{prob:main}
	Find a control policy for system (\ref{eqn:affine}) such that the objective function in (\ref{eqn:gcost}) is minimized, and the trajectory tracking, state constraints (\ref{eqn:state}), the TORQ priority structure $\langle R,\sim_p,\leq_p\rangle$, and control bounds (\ref{eqn:control}) are satisfied by the generated trajectory starting at a given $\bm x(0)$. 
\end{problem}

We consider {\em offline} and {\em  online} versions of  Problem \ref{prob:main}. For the offline version, we assume that all the states of the other traffic participants (e.g., pedestrians, parked vehicles, other moving vehicles) are known within the defined time interval $[0, T]$. In the online version, we assume that ego only has information on the existence and state of traffic participants in its local sensing range. 

\textbf{Offline approach:} Our approach to the { offline} version of Problem \ref{prob:main} follows the general lines of the QP method described in Sec. \ref{sec:qp-app}, where the (quadratic) cost is given by (\ref{eqn:gcost}). We use CLFs to track each point on the reference trajectory $\mathcal{X}_r$ during each time interval, and 
HOCBFs to implement the state constraints (\ref{eqn:state}). Since the optimization is performed over a very short time interval (i.e., it is myopic), tracking the reference trajectory is difficult, and careful tuning of the parameters in the CLFs (such as $c_3$ in (\ref{eqn:CLF})) is necessary.
Possible approaches include machine learning 
\cite{Xiao2020CDC} and gradient descent. In this paper, we use 
gradient descent to find parameters that minimize the tracking error and maintain feasibility (myopic and aggressive tracking can lead to infeasibility of the QPs). Note that, while these can be expensive, all the computation is performed offline. We show that the satisfaction of the rules from $R$ can be written as forward invariance for sets described by differential functions, and enforce them using HOCBFs.  The control bounds (\ref{eqn:control}) are considered as constraints. We provide an iterative solution that determines the rule relaxation in advance to Problem \ref{prob:main}, where each iteration involves solving a sequence of QPs. In the first iteration, all the rules from $R$ are considered. If the corresponding QPs are feasible, then an optimal control is found. Otherwise, we solve the problem again
starting from the initial time by iteratively relaxing the satisfaction of rules from subsets of $R$ based on their priorities, and minimize the corresponding relaxations by including them in the cost function.

\textbf{Online approach:}  For online control, we formulate a receding horizon (Model Predictive Control, MPC) optimization problem, in which the reference trajectory tracking error is included in the cost. Although tracking is still sub-optimal, it is much better in terms of tracking accuracy than the CLF approach for offline control described above. Moreover, extensive parameter optimization, which is performed in the offline case to avoid aggressive control induced by the short optimization period, is not necessary in the online case, as the optimization is over a longer period. Finally, only the active rules (corresponding to detected instances) at a given time add constraints to the optimization problem in the online case. 
As it will become clear later, even though the online approach is more expensive at each time step, it avoids potentially many optimization iterations that are performed in the offline case.

\section{Rules and priority structures}
\label{sec:trb}

In this section, we extend the rulebooks from \cite{Censi2020} by formalizing the rules and defining violation metrics. We introduce the TORQ priority structure, in which all rules are comparable, and it is particularly suited for the hierarchical control framework proposed in Sec. \ref{sec:optim-rule-approx}. 
\subsection{Rules} 
In the definition below, an {\em instance} $i\in S_p$ is a traffic participant or artifact with the respect to whom a rule applies, where $S_p$ is the set of all instances. 
For example, in a rule to maintain clearance from pedestrians, a pedestrian is an instance, and there can be many instances encountered by ego in a given scenario. Instances can also be traffic artifacts like lane boundaries or stop lines. 

\begin{definition} (Rule)\label{def:rule}
A rule is composed of a statement and three violation metrics.
A rule statement is a Boolean formula that ego should satisfy to the greatest extent possible for all times.~\footnote{This assumption applies to all the rules that we consider in this paper. However, there might exist rules (e.g., ``reach goal") for which satisfaction is required at some times.} 
A formula is inductively defined as:
\begin{equation} \label{eqn:task}
\varphi := \mu\vert \neg \varphi \vert \varphi_1\wedge \varphi_2,
\end{equation}
where $\varphi,\varphi_1,\varphi_2$ are formulas, $\mu :=(h(\bm x)\geq 0)$ is a predicate on the state vector $\bm x$ of system (\ref{eqn:affine}) with $h:\mathbb{R}^n\rightarrow \mathbb{R}$. $\wedge, \neg$ are Boolean operators for conjunction and negation, respectively. The three violation metrics for a rule $r$ are defined as:
 \begin{enumerate}
     \item instantaneous violation metric $\varrho_{r,i}(\bm x(t)) \in [0,1],$
     \item instance violation metric $\rho_{r,i}(\mathcal{X})\in [0,1]$, and
     \item total violation metric $P_{r}(\mathcal{X})\in [0,1]$,
 \end{enumerate}
  where $i$ is an instance, $\bm{x}(t)$ is a trajectory at time $t$ and $\mathcal{X}$ is a whole trajectory of ego. 
 The instantaneous violation metric $\varrho_{r,i}(\bm x(t))$
 quantifies violation by a trajectory at a specific time $t$ with respect to a given instant $i$. The instance violation metric $\rho_{r,i}(\mathcal{X})$ captures violation with respect to a given instance $i$ over the whole time of a trajectory, and is obtained by aggregating $\varrho_{r,i}(\bm x(t))$ over the entire time of a trajectory $\mathcal{X}$. 
 The total violation metric $P_{r}$ is the aggregation of the instance violation metric $\rho_{r,i}(\mathcal{X})$ over all instances $i\in S_p$. 
 \end{definition}
 
 The aggregations in the above definitions can be implemented through selection of a maximum or a minimum, integration over time, summation over instances, or by using general $L_p$ norms. A zero value for a violation score shows satisfaction of the rule. A strictly positive value denotes violation - the larger the score, the more ego violates the rule. 
 Throughout the paper, for simplicity, we use $\varrho_{r}$ and $\rho_{r}$ instead of $\varrho_{r,i}$ and $\rho_{r,i}$ if there is only one instance. Examples of rules (statements and violations metrics and scores) are given in Sec. \ref{sec:case} and in the Appendix. 

We divide the set of rules into two categories: (1) {\em clearance rules} - rules enforcing that ego maintains a minimal distance to other traffic participants and to the side of the road or lane (2) {\em non-clearance rules} - rules that are not contained in the first category, such as speed limit rules. Clearance rules relate to safety in the sense that satisfaction of these rules for all times guarantees absence of collisions (although violations of these rules do not necessarily mean that a collision will occur). In Sec. \ref{sec:rule-approx}, we provide a general methodology to express clearance rules as inequalities involving differentiable functions, which will allow us to enforce their satisfaction using HOCBFs.

 \begin{remark}
 The violation metrics from Def. \ref{def:rule} are inspired from Signal Temporal Logic (STL) robustness \cite{Maler2004,donze,mehdipour2019agm}, which quantifies how a signal (trajectory) satisfies a temporal logic formula. In this paper, we focus on rules that we aim to satisfy for all times. Therefore, the rules in (\ref{eqn:task}) can be seen as (particular) STL formulas, which all start with an ``always" temporal operator (omitted here).  
 \end{remark}

\subsection{Priority Structure}
\label{sec:prio-struc}

The pre-order rulebook from \cite{Censi2020}
defines a ``base" pre-order that captures relative priorities of some (comparable) rules, which are often similar in different states and countries. 
 A pre-order rulebook can be made more precise for a specific jurisdiction by adding rules and/or priority relations through priority refinement, rule aggregation and augmentation. This can be done through empirical studies or learning from local data to construct a total order rulebook. To order trajectories, since in a pre-order rules can be incomparable, the authors of \cite{Censi2020} enumerated all the total orders compatible with a given pre-order. In this paper, motivated by the hierarchical control framework
described in Sec. \ref{sec:optim-rule-approx}, we require that any two rules are in a relationship, in the sense that they are either equivalent or comparable with respect to their priorities. 

\begin{definition} [TORQ Priority Structure]
A \underline{T}otal \underline{OR}der over e\underline{Q}uivalence classes (TORQ) priority structure is a tuple $\langle R,\sim_p,\leq_p\rangle$, where $R$ is a finite set of rules, $\sim_p$ is an equivalence relation over $R$, and $\leq_p$ is a total order over the set of equivalence classes determined by $\sim_p$. 
\end{definition}

Equivalent rules (i.e., rules in the same class) have the same priority. Given two equivalence classes $\mathcal{O}_1$ and $\mathcal{O}_2$ with $\mathcal{O}_1\leq_p \mathcal{O}_2$, every rule $r_1\in \mathcal{O}_1$ has lower priority than every rule $r_2\in \mathcal{O}_2$. Since $\leq_p$ is a total order, any two rules $r_1,r_2\in R$ are comparable, in the sense that exactly one of the following three statements is true: (1) $r_1$ and $r_2$ have the same priority, (2) $r_1$ has (strictly) higher priority than $r_2$, and (3) $r_2$ has (strictly) higher priority than $r_1$. 

Given a TORQ $\langle R,\sim_p,\leq_p\rangle$, we can assign numerical (integer) priorities to the rules. We assign priority 1 to the equivalence class with the lowest priority, priority 2 to the next one and so on. 
{\em The rules inside an equivalence class inherit the priority from their equivalence class}. Given a priority structure $\langle R,\sim_p,\leq_p\rangle$ and violation scores for the rules in $R$, we can compare trajectories:

\begin{definition}[Trajectory Comparison] \label{def:traj_cmp}
A trajectory $\mathcal{X}_1$ is said to be {\bf better} (less violating) than another trajectory $\mathcal{X}_2$
if the highest priority rule(s) violated by $\mathcal{X}_1$ has a lower priority than the highest priority rule(s) violated by $\mathcal{X}_2$. If both trajectories violate equivalent highest priority rules, then the one with the smaller maximum of the total violation scores of the highest priority violated rules is better. If the violation scores are the same, we consider the violation scores of the rules with lower priority, and reiterate.
\end{definition}
\vspace{-2pt}
It is easy to see that, by following Def. \ref{def:traj_cmp}, given two trajectories, one is always better than the other. In theory, two trajectories could also be equivalent if $\mathcal{X}_1$ is not better than $\mathcal{X}_2$ and $\mathcal{X}_2$ is not better than $\mathcal{X}_1$ (i.e., the two trajectories only violate equivalent rules and with the same scores). However, since scores are continuous, in practice it is very unlikely that two scores be equal, and we do not consider equivalent trajectories. 

\begin{example}\label{ex:three-traj}
Consider the driving scenario from Fig. \ref{fig:autonomous1} and TORQ $\langle R,\sim_p,\leq_p\rangle$ in Fig. \ref{fig:rulebook1}, where $R = \{r_7, r_3, r_5, r_6\}$,  and $r_7$: ``Clearance with parked vehicles'', $r_3$: ``Lane keeping'', $r_5$: ``Speed limit'' and $r_6$: ``Comfort''. There are 3 equivalence classes given by $\mathcal{O}_1=\{r_7\}$, $\mathcal{O}_2=\{r_3,r_5\}$ and $\mathcal{O}_3=\{r_6\}$. Rule $r_7$ has priority 1, $r_3$ and $r_5$ have priority 2, and $r_6$ has priority 3. Assume the instance (same as total, as there is only one instance for each rule) violation scores of rules $r_i$,  $i=7,3,5,6$ by trajectories $a,b,c$ are given by $\rho_i=(\rho_i(a),\rho_i(b),\rho_i(c))$ as shown in Fig. \ref{fig:rulebook1}. 
Based on Def. \ref{def:traj_cmp}, 
trajectory $b$ is better (less violating) than trajectory $a$ since the highest priority rule violated by $b$ ($r_3$) has a lower priority than the highest priority rule violated by $a$ ($r_7$). The same argument holds for trajectories $a$ and $c$, i.e., $c$ is better than $a$. The highest priority rules violated by trajectories $b$ and $c$ have the same priorities. Since the maximum violation score of the highest priority rules violated by $b$ is smaller than that for $c$, i.e., $\max(\rho_3(b),\rho_5(b))=0.1$, $\max(\rho_3(c),\rho_5(c))=0.4$, trajectory $b$ is better than $c$.
\end{example}
\begin{definition} (TORQ satisfaction) \label{def:rb_satisfy}
A trajectory $\mathcal{X}$ of system (\ref{eqn:affine}) starting at $\bm x(0)$
satisfies a TORQ
$\langle R,\sim_p,\leq_p\rangle$ (i.e., $\mathcal{X}\models \langle R,\sim_p,\leq_p\rangle$), if there are no better trajectories of (\ref{eqn:affine}) starting at $\bm x(0)$.
\end{definition}

Def. \ref{def:rb_satisfy} is central to our solution to Problem \ref{prob:main} (see Sec. \ref{sec:optim-rule-approx}), which is based on an iterative relaxation of the rules according to their satisfaction of the TORQ. 

\begin{figure}[htb]
	\centering
	\subfigure[Possible trajectories]{
		\begin{minipage}[t]{0.45\linewidth}
			\centering
			\includegraphics[scale=0.45]{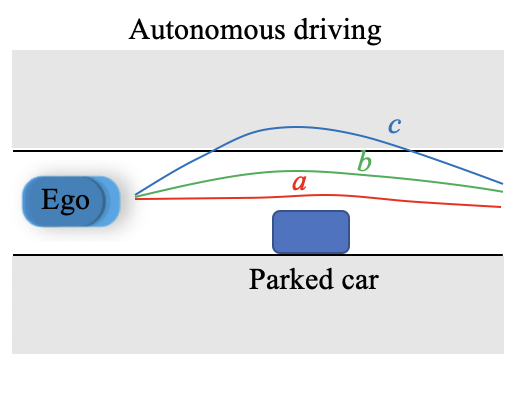} 
			\vspace{-1.8mm}
			\label{fig:autonomous1}%
		\end{minipage}%
	}	
	\subfigure[TORQ with violation scores.]{
		\begin{minipage}[t]{0.45\linewidth}
			\centering
			\includegraphics[scale=0.25]{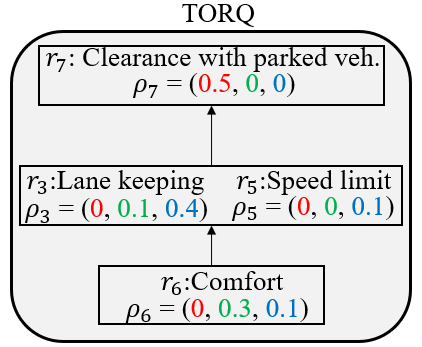} 
			\label{fig:rulebook1}%
		\end{minipage}%
	}	
	\centering
	\vspace{-6pt}
	\caption{An autonomous driving scenario with three possible trajectories, 4 rules, and 3 equivalence classes. The colors for the scores correspond to the colors of the trajectories. The rectangles show the equivalence classes}
\end{figure}

\section{CLEARANCE AND OPTIMAL DISK COVERAGE} 
\label{sec:rule-approx}

Satisfaction of a priority structure can be enforced by formulating real-time constraints on ego state $\bm x(t)$ that appear in the violation metrics. Satisfaction of the non-clearance rules can be easily implemented using HOCBFs (See Sec. \ref{sec:optim-rule-approx}, Sec. \ref{sec:app-rule-def}). For clearance rules, we define a notion of clearance region around ego and around the traffic participants in $S_p$ with respect to whom the rule applies (e.g., pedestrians and other vehicles). 

The clearance region for ego is defined as a rectangle with tunable speed-dependent lengths (i.e., we may choose to have a larger clearance from pedestrians when ego is driving with higher speeds) and defined based on ego footprint and functions $h_f(\bm x), h_b(\bm x), h_l(\bm x), h_r(\bm x)$ that determine the front, back, left, and right clearances as illustrated in Fig. \ref{fig:approx}, where $h_f,h_b,h_l,h_r:\mathbb{R}^n\rightarrow \mathbb{R}_{\geq0}$. The clearance regions for participants (instances) are defined such that they comply with their geometry and cover their footprints, e.g., (fixed-length) rectangles for other vehicles and (fixed-radius) disks for pedestrians, as shown in Fig. \ref{fig:approx}.

To satisfy a clearance rule involving traffic participants, we need to avoid any overlaps between the clearance regions of ego and traffic participants. 

We define a function $d_{min}(\bm x, \bm x_i): \mathbb{R}^{n+n_i} \rightarrow \mathbb{R}$ to determine the signed distance between the clearance regions of ego and participant $i\in S_p$ ($\bm x_i\in\mathbb{R}^{n_i}$ denotes the state of participant $i$), which is negative if the clearance regions overlap. Therefore, satisfaction of a clearance rule can be imposed by having a constraint on $d_{min}(\bm x, \bm x_i)$ to be non-negative. For the clearance rules ``stay in lane" and ``stay in drivable area", we require that ego clearance region be within the lane and the drivable area, respectively.

However, finding $d_{min}(\bm x, \bm x_i)$ can be computationally expensive. For example, the distance between two rectangles could be from corner to corner, corner to edge, or edge to edge. Since each rectangle has $4$ corners and $4$ edges, there are 64 possible cases. More importantly, this computation leads to a non-smooth $d_{min}(\bm x, \bm x_i)$ function, which cannot be used to enforce clearance using a CBF approach. To address these issues, we propose an optimal coverage of the rectangles with disks, which allows mapping the clearance rules to a set of smooth HOCBF constraints (i.e., there will be one constraint for each pair of centers of disks pertaining to different traffic participants). 

We use $l > 0$ and $w > 0$ to denote the length and width of ego's footprint, respectively. Assume we use $z \in\mathbb{N}$ disks with centers located on the center line of the clearance region to cover it (see Fig. \ref{fig:proof}). Since all the disks have the same radius, the minimum radius to fully cover ego's clearance region, denoted by $ \mathfrak{r}>0$, is given by:
\begin{equation}\label{eqn:radius}
    \mathfrak{r} = \sqrt{\left(\frac{w + h_l(\bm x) + h_r(\bm x)}{2} \right)^2 + \left(\frac{l+h_f(\bm x) + h_b(\bm x)}{2z}\right)^2}.
\end{equation}
The minimum radius $\mathfrak{r}_i$ of the rectangular clearance region for a traffic participant $i \in S_p$ with disks number $z_i$ is defined in a similar way using the length and width of its footprint and setting $h_l,h_r,h_b,h_f=0$.

\begin{figure}[thpb]
	\centering
	\includegraphics[scale=0.30]{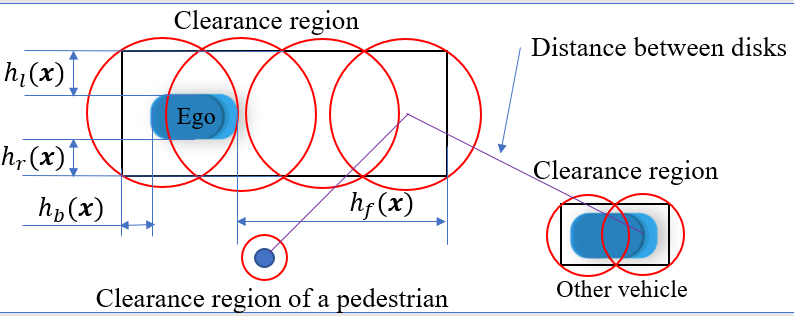}
	\caption{The clearance regions and their coverage with disks: the clearance region and the disks are speed dependent for ego and fixed for the other vehicle and the pedestrian. We consider the distances between all the possible pairs of disks from ego and other traffic participants (e.g., pedestrians, parked vehicles). There are 12 distance pairs in total, and we only show two of them w.r.t. the pedestrian and another vehicle, respectively.}
	\label{fig:approx}%
\end{figure}
\begin{figure}[!bht]
	\centering
	\includegraphics[scale=0.30]{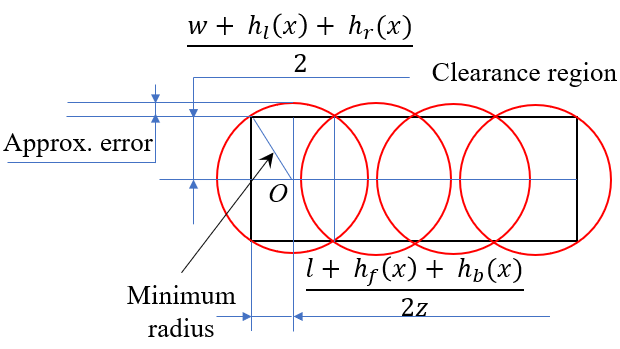}
	\vspace{-3pt}
	\caption{The optimal disk coverage of a clearance region.}
	\label{fig:proof}%
\end{figure}

Assume the center of the disk $j\in \{1,\dots,z\}$ for ego, and the center of the disk $k\in \{1,\dots,z_i\}$ for the instance $i \in S_p$ are given by $(x_{e,j}, y_{e,j}) \in \mathbb{R}^2$ and $(x_{i,k}, y_{i,k})\in \mathbb{R}^2$, respectively (See Appendix \ref{sec:app-coverage}). To avoid any overlap between the corresponding disks of ego and the instance $i\in S_p$, we impose the following constraints:
\begin{equation}\label{eqn:rule_cons}
\begin{aligned}
        \sqrt{(x_{e,j} - x_{i,k})^2 + (y_{e,j} - y_{i,k})^2} \geq   \mathfrak{r} + \mathfrak{r}_i ,\\ \forall j\in\{1,\dots,z\}, \forall k\in\{1,\dots,z_i\}. 
    \end{aligned}
\end{equation}
Since disks fully cover the clearance regions, enforcing \eqref{eqn:rule_cons} also guarantees that $d_{min}(\bm x, \bm x_i)\ge0$. For the clearance rules ``stay in lane" and ``stay in drivable area", we can get similar constraints as (\ref{eqn:rule_cons}) to make the disks that cover ego's clearance region stay within them (e.g., we can consider $h_l,h_r,h_b,h_f=0$ and formulate \eqref{eqn:rule_cons} such that the distance between ego disk centers and the line in the middle of ego's current lane be less than $\frac{w_l}{2} - \mathfrak{r}$, where $w_l > 0$ denotes the lane width). 
Thus, we can formulate satisfaction of all the clearance rules as continuously differentiable constraints (\ref{eqn:rule_cons}), and implement them using HOCBFs.

To efficiently formulate the proposed optimal disk coverage approach, we need to find the minimum number of disks that fully cover the clearance regions as it determines the number of constraints in \eqref{eqn:rule_cons}. Moreover, we need to minimize the lateral approximation error since large errors imply overly conservative constraints (See Fig. \ref{fig:proof}). This can be formally defined as an optimization problem, and solved offline 
to determine the numbers and radii of the disks in \eqref{eqn:rule_cons} (the details are provided in Appendix \ref{sec:app-coverage}).

\section{OFFLINE SOLUTION TO PROBLEM \ref{prob:main}}
\label{sec:oc}

\subsection{Trajectory Tracking}\label{sec:tracking}

As discussed in Sec. \ref{sec:vd}, (\ref{eqn:affine}) can define ``traditional" vehicle dynamics with respect to an inertial reference frame
\cite{Ames2017}, or dynamics defined along a given reference trajectory \cite{Rucco2015} (see (\ref{eqn:vehicle})). The case study considered in this paper falls in the second category (the middle of ego's initial 
lane is the default reference trajectory). We use the model from \cite{Rucco2015}, in which part of the state of (\ref{eqn:affine}) captures the tracking errors with respect to the reference trajectory. 
The tracking problem then becomes stabilizing the error states to 0. Suppose the error state vector is $\bm y\in{R}^{n_0}, n_0 \leq n$ (the components in $\bm y$ are part of the components in $\bm x$). We define a CLF $V(\bm x) = ||\bm y||^2$ ($c_3 = \epsilon > 0$ in Def. \ref{def:clf}). Any control $\bm u$ that satisfies the relaxed CLF constraint \cite{Ames2017} given by:
\begin{equation} \label{eqn:clf1}
     L_{f}V(\bm x)+L_{g}V(\bm x)
\bm u + \epsilon V(\bm x)\leq \delta_e,
\end{equation} 
exponentially stabilizes the error states to 0 if $\delta_e(t) = 0, \forall t\in[0,T]$, where $\delta_e>0$ is a relaxation variable that will be used in the cost to compromise between stabilization and feasibility. Note that the CLF constraint (\ref{eqn:clf1}) only works for $V(\bm x)$ with relative degree 1. If the relative degree is larger than $1$, we can use input-to-state linearization and state feedback control \cite{Khalil2002} to reduce the relative degree to one \cite{Xiao2020}.

\subsection{Offline Optimal Control}
\label{sec:optim-rule-approx}

In this section, we present our complete framework to solve the offline version of
Problem \ref{prob:main}. In this case, the information about all the instances is available for all times, and thus, all the rules are considered for all times between $0$ and $T$.
We propose a recursive algorithm to iteratively relax the satisfaction of the rules in the priority structure $\langle R,\sim_p,\leq_p\rangle$ (if needed) based on the total order over the equivalence classes. 

Let $R_\mathcal{O}$ be the set of equivalence classes in $\langle R,\sim_p,\leq_p\rangle$, and $N_\mathcal{O}$ be the cardinality of $R_\mathcal{O}$. 
We construct the power set of equivalence classes denoted by $S = 2^{R_\mathcal{O}}$, and incrementally (from low to high priority) sort the sets in $S$ based on the highest priority of the equivalence classes in each set according to the total order and denote the sorted set by $S_{sorted} = \{S_1, S_2, \dots, S_{2^{N_\mathcal{O}}}\}$, where $S_1 =\{ \emptyset\}$. We use this sorted set in our optimal control formulation to obtain satisfaction of the higher priority classes, even at the cost of relaxing satisfaction of the lower priority classes. Therefore, from Def. \ref{def:rb_satisfy}, the solution of the optimal control will satisfy the priority structure. 
\begin{example}\label{exm:sorted}
Reconsider Exm. \ref{ex:three-traj}. 
We define $R_\mathcal{O} = \{ \mathcal{O}_1,\mathcal{O}_2,\mathcal{O}_3\}$. 
Based on the given total order $\mathcal{O}_1\leq_p \mathcal{O}_2 \leq_p \mathcal{O}_3$, we can write the sorted power set as $S_{sorted} = \left \{\right.\!\{\emptyset\}, \{\mathcal{O}_1\},\{\mathcal{O}_2\},\{\mathcal{O}_1,\mathcal{O}_2\},\{\mathcal{O}_3\}, \{\mathcal{O}_1,\mathcal{O}_3\},\{\mathcal{O}_2,\mathcal{O}_3\},\\ \{\mathcal{O}_1,\mathcal{O}_2,\mathcal{O}_3\} \}$. 
\end{example}

In order to find a trajectory that satisfies a given TORQ, we first assume that all the rules are satisfied. Starting from $S_1=\{\emptyset\}$ in the sorted set $S_{sorted}$, we solve Problem \ref{prob:main} given that no rules are relaxed, i.e., all the rules must be satisfied. If the problem is infeasible, we move to the next set $S_2 \in S_{sorted}$, and relax all the rules of all the equivalence classes in $S_2$ while enforcing satisfaction of all the other rules in the equivalence class set denoted by $R_\mathcal{O} \setminus S_2$. This procedure is done recursively until we find a feasible solution of Problem \ref{prob:main}.

Formally, at $k = 1,2\dots, 2^{N_\mathcal{O}}$ for $S_k\in S_{sorted}$, we relax all the rules $i\in \mathcal{O}$ for all the equivalence classes $\mathcal{O} \in S_k$ and reformulate Problem \ref{prob:main} as the following optimal control problem:
\begin{equation}
\min_{\bm u,\delta_e, {\delta_i}_{, i\in \mathcal{O}, \mathcal{O}\in S_k}} \int_{0}^{T}J(||\bm u||) +  p_e\delta_e^2 +\sum_{i\in \mathcal{O}, \mathcal{O}\in S_k}p_i \delta_i^2dt \label{eqn:cost2}
\end{equation}
subject to:\\
\text{\qquad dynamics (\ref{eqn:affine}), control bounds (\ref{eqn:control}), CLF constraint (\ref{eqn:clf1}),}
\begin{align}
&\begin{aligned}L_{f}^{m_{j}}b_{j}(\bm x)+L_{g}L_{f}^{m_{j}-1}b_{j}(\bm x)\bm
u+S(b_{j}(\bm x))&\\+\alpha_{m_j}(\psi_{m_{j}-1}(\bm x))&\geq0, \\
\forall j\in \mathcal{O}, \forall \mathcal{O}\in R_{\mathcal{O}}\setminus S_k,\end{aligned}\label{eqn:optim-not-relax-rules}
\\
&\begin{aligned}L_{f}^{m_{i}}b_{i}(\bm x)+L_{g}L_{f}^{m_{i}-1}b_{i}(\bm x)\bm
u+S(b_{i}(\bm x))&\\+\alpha_{m_i}(\psi_{m_{i}-1}(\bm x))&\geq \delta_i,\\
\forall i\in \mathcal{O}, \forall \mathcal{O}\in S_k,\end{aligned}\label{eqn:optim-relax-rules}
\\
&\begin{aligned}
L_{f}^{m_{l}}b_{l}(\bm x)+L_{g}L_{f}^{m_{l}-1}b_{lim,l}(\bm x)\bm
u&+S(b_{lim,l}(\bm x))\\+\alpha_{m_l}(\psi_{m_{l}-1}(\bm x))&\geq0,\forall l\in \{1,\dots, 2n\},
\end{aligned} \label{eqn:optim-state-cons}
\end{align}
where $p_e > 0$ and $p_i>0,  i\in \mathcal{O}, \mathcal{O}\in S_k$ assign the trade-off between the CLF relaxation $\delta_e$ (used for trajectory tracking) and the HOCBF relaxations $\delta_i$.
$m_i,m_j,m_l$ denote the relative degrees of $b_i(\bm x),b_j(\bm x),b_{lim,l }(\bm x)$, respectively. The functions $b_i(\bm x)$ and $b_j(\bm x)$ are HOCBFs for the rules in $\langle R,\sim_p,\leq_p\rangle$, and are implemented directly from the rule statement for non-clearance rules or by using the optimal disk coverage framework for clearance rules. At relaxation step $k$, HOCBFs corresponding to the rules in $\mathcal{O}$, $\forall\mathcal{O}\in S_k$ are relaxed by adding $p_i>0,  i\in \mathcal{O}, \mathcal{O}\in S_k$ in \eqref{eqn:optim-relax-rules}, while for other rules in $R$ in \eqref{eqn:optim-not-relax-rules} and the state constraints \eqref{eqn:optim-state-cons}, regular HOCBFs are used. We assign $p_i,  i\in \mathcal{O}, \mathcal{O}\in S_k$ according to their relative priorities, i.e., we choose a larger $p_i$ for the rule $i$ that belongs to a higher priority class. The above optimization problem is solved from time $0$ to $T$ for each iteration using the time discretization method described in Sec. \ref{sec:qp-app}. 

The functions $b_{lim,l}(\bm x), l\in\{1,\dots,2n\}$ are HOCBFs for the state limitations (\ref{eqn:state}). The functions $\psi_{m_i}(\bm x), \psi_{m_j}(\bm x), \psi_{m_l}(\bm x)$ are defined as in (\ref{eqn:functions}). $\alpha_{m_i},\alpha_{m_j},\alpha_{m_l}$ are 
penalized (i.e., multiplied by scalars)
to improve the feasibility of the problem above \cite{Xiao2019,Xiao2020CDC}.

If the above optimization problem is feasible for all $t\in[0,T]$, we can specifically determine which rules (within an equivalence class) are relaxed based on the values of $\delta_i, i\in \mathcal{O}, \mathcal{O}\in S_k$ in the optimal solution (i.e., if $\delta_i(t) = 0, \forall t\in\{0,T\}$, then rule $i$ does not need to be relaxed). This procedure is summarized in Alg. \ref{alg:sort}. 

\begin{remark}[Complexity]
The optimization problem (\ref{eqn:cost2}) is solved using QPs as described at the end of Sec.~\ref{sec:pre}. The complexity of a QP is $O(y^3)$, where $y\in\mathbb{N}$ is the number of decision variables. The total time for each iteration $k\in\{1,\dots, 2^{N_{\mathcal{O}}}\}$ depends on the discretization time $\Delta t$ and the final time $T$. The worst (highest) possible number of QPs  is $2^{N_{\mathcal{O}}}(T/\Delta t)\zeta$, where $\zeta\in\mathbb{N}$ is the number of parameter optimizations performed at each iteration to achieve a desired tracking accuracy while maintaining feasibility. Some running times are given in Appendix \ref{sec:tool}. 
\end{remark}

\subsection{Pass/Fail Evaluation}\label{sec:p/f}
As an extension to the offline version of Problem \ref{prob:main}, we formulate and solve a pass/fail (P/F) procedure, in which we are given a vehicle trajectory, and the goal is to accept (pass, P) or reject (fail, F) it based on the satisfaction of the rules. Specifically, given a candidate trajectory $\mathcal{X}_c$ of system (\ref{eqn:affine}), and given a TORQ $\langle R,\sim_p,\leq_p\rangle$, we pass (P) $\mathcal{X}_c$ if we cannot find a better trajectory according to Def. \ref{def:traj_cmp}. Otherwise, we fail (F) $\mathcal{X}_c$. 

We proceed as follows: we find the total violation scores of the rules in $\langle R,\sim_p,\leq_p\rangle$ for the candidate trajectory $\mathcal{X}_c$. If no rules in $R$ are violated, then no less-violating trajectory exists and we pass the candidate trajectory. Otherwise, 
we investigate the existence of a better (less violating) trajectory. We take the middle of ego's initial lane as the reference trajectory $\mathcal{X}_r$
and re-formulate the optimal control problem in (\ref{eqn:cost2}) to recursively relax rules such that if the optimization is feasible, the generated trajectory is better than the candidate trajectory $\mathcal{X}_c$. Specifically, assume that 
the highest priority rule(s) that the candidate trajectory $\mathcal{X}_c$ 
violates belongs to $\mathcal{O}_H$, $H \in\mathbb{N}$. Let $R_H\subseteq R_{\mathcal{O}}$ denote the set of equivalence classes with priorities not larger than $H$, and $N_H \in\mathbb{N}$ denote the cardinality of $R_H$. We construct a power set $S_{H} = 2^{R_H}$, and then apply Alg. \ref{alg:sort} , in which we replace $R_{\mathcal{O}}$ by $R_H$. We start to relax the highest priority equivalent classes in $S_H$ in order to quickly find a solution (as the rules in higher priority equivalence classes are usually related to clearance, and it is more likely to find a solution by relaxing them). \vspace{-3pt}
\begin{remark}\label{remark:cond-pass}
The procedure described above would fail a candidate trajectory $\mathcal{X}_c$ even if only a slightly better alternate trajectory (i.e., violating rules of the same highest priority but with slightly smaller violation scores) can be found by solving the optimal control problem. In practice, this might lead to an undesirably high failure rate. One way to deal with this, which we will consider in future work (see Sec. \ref{sec:con}), is to allow for more classification categories, e.g., ``Provisional Pass" (PP), which can then trigger further investigation of $\mathcal{X}_c$.   
\end{remark}

\begin{example}
Reconsider Exm. \ref{ex:three-traj} and assume trajectory $b$ is a candidate trajectory which violates rules $r_3, r_6$, 
thus, the highest priority rule that is violated by trajectory $b$ belongs to $\mathcal{O}_2$.  
We construct $R_H = \{ \mathcal{O}_1,\mathcal{O}_2\}$. 
The power set $S_H=2^{R_H}$ is then defined as $S_H = \{ \{\emptyset\},\{\mathcal{O}_1\}, \{\mathcal{O}_2\},\{\mathcal{O}_1,\mathcal{O}_2\}\}$, and is sorted based on the total order as $S_{H_{sorted}} = \{\{\emptyset\}, \{\mathcal{O}_1\},\{\mathcal{O}_2\}, \{\mathcal{O}_1,\mathcal{O}_2\}\}$.
\end{example}
\begin{algorithm}
\caption{Offline optimal control} \label{alg:sort}
	\KwIn{System (\ref{eqn:affine}) with $\bm x(0)$, cost function (\ref{eqn:gcost}), control bound (\ref{eqn:control}), state constraint (\ref{eqn:state}), TORQ $\langle R,\sim_p,\leq_p\rangle$, reference trajectory $\mathcal{X}_r$}
	\KwOut{Optimal ego trajectory and set of relaxed rules}
	1. Construct the power set of equivalence classes $S = 2^{R_\mathcal{O}}$\;
	2. Sort the sets in $S$ based on the highest priority of the equivalence classes in each set according to the total order and get $S_{sorted} = \{S_1, S_2, \dots, S_{2^{N_\mathcal{O}}}\}$\;
	3. $k = 0$\;
	\While{$k++\leq 2^{N_\mathcal{O}}$
	}{
    	Solve (\ref{eqn:cost2}) s.t. (\ref{eqn:affine}), (\ref{eqn:control}), (\ref{eqn:clf1}),  (\ref{eqn:optim-relax-rules}), (\ref{eqn:optim-not-relax-rules}) and (\ref{eqn:optim-state-cons}) from 0 to $T$\; 
    	\If{the above problem is feasible for all $t\in[0,T]$}{
    	    Generate the optimal trajectory $\mathcal{X}^*$ from \eqref{eqn:affine}\;
        	Construct relaxed set $R_{relax} = \{i: i\in \mathcal{O}, \mathcal{O}\in S_{k}\}$\;
        	\If{$\delta_i(t) = 0, \forall t\in[0,T]$}{
        	    Remove $i$ from $R_{relax}$\;}
        	break\;
    	}
	}
	4. Return $\mathcal{X}^*$ and $R_{relax}$\;
\end{algorithm}

\section{ONLINE SOLUTION TO PROBLEM \ref{prob:main}}
\label{sec:online-solution}

In this section, we consider the online version of Problem \ref{prob:main}, in which ego has access to local information only (e.g., the existence and states of instances) from its on-board sensors.

We propose a model predictive control (MPC) method that performs optimization over a finite horizon, and improves tracking performance, without the need to extensively tune the parameters in the optimization. We use the Adomian Decomposition Method (ADM) \cite{Zhang2012} to discretize dynamics (\ref{eqn:affine}) and obtain  the predictive model:
\begin{equation}\label{eqn:predictive}
    \bm x(k+1) = \hat f(\bm x(k)) + \hat g(\bm x(k))\bm u(k),
\end{equation}
where $\hat f:\mathbb{R}^{n}\rightarrow\mathbb{R}^{n}$, $\hat g:\mathbb{R}^{n}\rightarrow\mathbb{R}^{n\times q}$, $k\in\{0,1,\dots\}$ denotes the discrete time instants. The discretization time is denoted by $\Delta T$.~\footnote{To provide a fair comparison with the offline method, in Sec. \ref{sec:case}, we assume that $\Delta T$ is the same as the discretization interval used in the QP-approach to the offline version of the problem.} The accuracy of the predictive model above with respect to (\ref{eqn:affine}) depends on the approximation order we take in ADM. 

To keep computational complexity low, we propose a {\em two-step approach} for the  online version of Problem \ref{prob:main}. In the {\em first step}, we minimize the tracking error and the original cost $J$ (Sec. \ref{sec:online_track}). In the {\em second step}, we account for rule satisfaction by using CBFs and HOCBFs (Sec. \ref{sec:online-opt-ctrl}). 

\subsection{Trajectory Tracking}
\label{sec:online_track}

The CLF tracking approach introduced in the last section for offline control is very myopic as the optimization is performed 
at every time step over each (short) discretization time interval. As a result, it is sub-optimal, and ego tends to be aggressive to eliminate tracking errors, which could adversely lead to larger errors in the following time interval. This aggressiveness heavily depends on the parameters of the CLFs (e.g., $\epsilon$ and $p_e$, etc.), which can be properly tuned in offline control (see Sec. \ref{sec:optim-rule-approx}). However, finding such parameters in the short update of the online implementation is infeasible. We propose a receding horizon (MPC) approach that addresses this issue.  

To track the reference trajectory, we use a metric $C_1:\mathbb{R}^n\rightarrow\mathbb{R}^{\geq 0}$, which is defined at each state $\bm x$ (if $C_1(\bm x) = 0$, then the tracking error is 0 at $\bm x$; large $C_1(\bm x)$ denotes large tracking error). We solve the following optimization problem with receding horizon $H\in\mathbb{N}$ at each time $t\geq 0$ :
\begin{equation}\label{eqn:mpc}
    (\bm u_{mpc},\bm x_{mpc}) \!\!=\!\! \arg \!\!\!\!\! \min_{\bm u(0:H-1), \bm x(1:H)} \sum_{i = 1}^{H}C_1(\bm x(i)) \!+\!\! \sum_{i = 0}^{H-1}J(||\bm u(i)||)
\end{equation}
s.t. (\ref{eqn:state}) (evaluated at $1,\ldots,H$), (\ref{eqn:control}), and (\ref{eqn:predictive}) (for $k\in\{0,\dots, H-1\}$). The initial state $\bm x(0)$ in (\ref{eqn:mpc}) is set to $\bm x(t)$ at time instant $t\geq 0$. Note that the above optimization is, in general, a nonlinear program (NLP).

If the reference trajectory is given as a sequence of points, $C_1(\bm x)$ can be obtained through regression. If  dynamics (\ref{eqn:affine}) is defined with respect to the reference trajectory, we can find $C_1(\bm x)$ by regression of the reference trajectory curvature. 
An example can be found in the case study Section \ref{sec:case}, in which the road curvature is captured in the vehicle dynamics. We perform regression over the approximate curvature of discrete trajectory points, and the tracking errors, which are system states, can be derived from the regression within horizon $H$. 

Note that the constraints corresponding to rule satisfactions could be, in principle, incorporated in the above optimization problem (similar to the procedure for offline control).  
However, this optimization problem can easily become infeasible, especially when many constraints are active. Therefore, as already stated at the beginning of Sec. \ref{sec:online-solution}, we use CBFs / HOCBFs to account for all the rules in the {\em second step} (Sec. \ref{sec:online-opt-ctrl}). In short, we use the optimal $\bm u_{mpc}(k), k\in\{0,\dots, H-1\}$ obtained by solving (\ref{eqn:mpc}) as a reference for the convex CBF controller.
In this two-step approach, the computation complexity is significantly reduced, and the MPC controller leads to better tracking and cost optimization. 

\subsection{TORQ with Rule Activation and Deactivation}
\label{sec:torq_act_deact}

To provide a solution to the online version of Problem \ref{prob:main}, we first
show how to manage TORQ $\langle R,\sim_p,\leq_p\rangle$ in an online fashion. To this goal, we classify the rules from $R$ into {\em instance-dependent} (such as clearance with pedestrian, clearance with parked) and {\em instance-independent} rules (such as speed limit and comfort). Instance-independent rules should always be taken into account. However, instance-dependent rules should only be considered when the corresponding instances are in ego's local sensing range. 

To capture this, we define an online TORQ (oTORQ). Starting from the (full) TORQ defined above, at time $t=0$, we first 
deactivate all the instance-dependent rules. As instances are detected by the sensors, the corresponding instance-dependent rules are activated. Therefore, at each time, we have a TORQ with activated and deactivated rules. The oTORQ is obtained by deleting the deactivated rules. An equivalence class is removed if it is empty. Note that, by the connexity property of the total order, the deletion does not affect the comparability of the remaining rules. In a graphical representation of a oTORQ, if the empty equivalence class is at the top or at the bottom, the incoming or outgoing edges are deleted; otherwise the incoming and outgoing edges into the empty class are collapsed into one edge with the corresponding orientation (see Fig. \ref{fig:otorq}). 

\begin{figure}[!bht]
	\centering
	\includegraphics[scale=0.28]{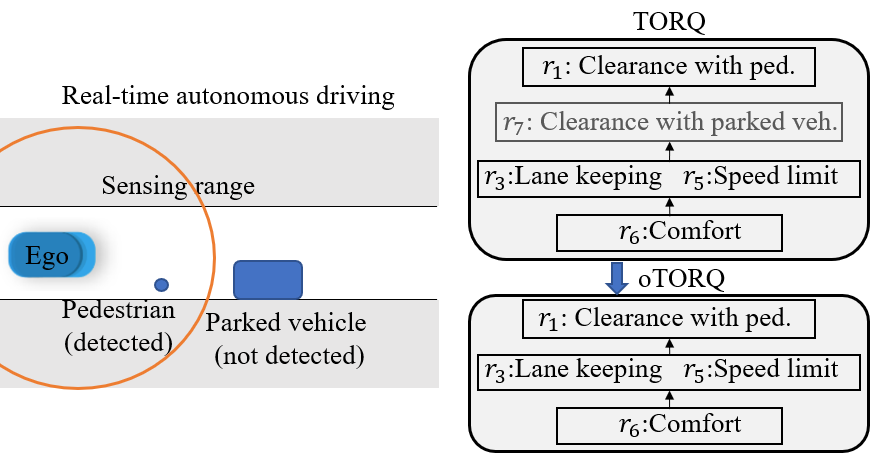}
	\vspace{-3pt}
	\caption{TORQ with rule activation and deactivation and oTORQ; $r_1, r_7$ are the instance-dependent rules. 
	}
	\label{fig:otorq}%
\end{figure}

\subsection{Online Optimal Control}
\label{sec:online-opt-ctrl}

In this section, we present the algorithm providing a solution to the online version of Problem \ref{prob:main}. 
Similar to the offline case to from Sec. \ref{sec:optim-rule-approx}, we use the optimal disk coverage approach from Sec. \ref{sec:rule-approx}
to approximately convert the satisfaction of clearance rules to continuously differentiable constraints.

Let $S_r(t)\subseteq S = 2^{R_\mathcal{O}}$ denote the set of sets of equivalence classes in which all the rules are relaxed at time $t\geq 0$. We initialize $S_r(0)=\emptyset$. At time $t$, we first solve the MPC tracking problem from Eqn. (\ref{eqn:mpc}), and get $\bm u_{mpc}(k), k\in\{0,\dots, H-1\}$. Then, based on information from the local sensing range, we find the oTORQ as described in Sec. \ref{sec:torq_act_deact}. We relax all the rules from the equivalence classes in $S_r(t)$ with the CBF method, and formulate the following optimal control problem with receding horizon $H_t \leq H$, where $H_t$ denotes the forward feasibility horizon for the below QPs, and it is chosen as large as the computational resources allow for.
\begin{equation}
\min_{\bm u, {\delta_i}_{, i\in \mathcal{O}, \mathcal{O}\in S_r(t)}}\!\!\! ||\bm u(k) - \bm u_{mpc}(k)||^2 +\!\!\!\sum_{i\in \mathcal{O}, \mathcal{O}\in S_r(t)}\!\!\!\!\!p_i \delta_i^2( k) \label{eqn:cost_online}
\end{equation}
subject to:

  control bounds (\ref{eqn:control}), HOCBFs for state limitations (\ref{eqn:optim-state-cons}),
{\small\begin{align}
&\begin{aligned}L_{f}^{m_{j}}b_{j}(\bm x)+L_{g}L_{f}^{m_{j}-1}b_{j}(\bm x)\bm
u&+S(b_{j}(\bm x))\\+\alpha_{m_j}(\psi_{m_{j}-1}(\bm x))&\geq0, \forall j\in \mathcal{O}, \forall \mathcal{O}\in R_{\mathcal{O}}\setminus S_r(t),\end{aligned}\label{eqn:optim-not-relax-rules-online}
\\
&\begin{aligned}L_{f}^{m_{i}}b_{i}(\bm x)+L_{g}L_{f}^{m_{i}-1}b_{i}(\bm x)\bm
u+&S(b_{i}(\bm x))\\+\alpha_{m_i}(\psi_{m_{i}-1}(\bm x))&\geq \delta_i,\forall i\in \mathcal{O}, \forall \mathcal{O}\in S_r(t),\end{aligned}\label{eqn:optim-relax-rules-online}
\end{align}}where $k\in\{0,\dots, H_t - 1\}$. The way to solve the above optimization problem is introduced next. 


At time $t$, we get $R$, $R_{\mathcal{O}}$ from the oTORQ. The sensors can only obtain the state information of all the detected instances at the current time $t$. However, in order to solve the above QP (\ref{eqn:cost_online}) for all $k\in\{0,\dots, H_t - 1\}$, we need to know the state information of all the detected instances within horizon $H_t$. In this paper, we get (predict) their states within horizon $H_t$ based on their dynamics (such as (\ref{eqn:predictive})) and their current controls (from the sensor). We may also use machine learning techniques (such as Recurrent Neural Network (RNN)) to estimate their states within horizon $H_t$. 
We solve (\ref{eqn:cost_online}) and obtain $\bm u^*(k)$ at each $k\in\{0,\dots, H_t - 1\}$, and then update the dynamics (\ref{eqn:affine}) with $\bm u^*(k)$ in the following time interval $(t+k\Delta T, t+(k+1)\Delta T]$ and get $\bm x(t + (k+1)\Delta T)$ for the next QP (\ref{eqn:cost_online}) at step $k + 1$. If the optimization problem from Eqn. (\ref{eqn:cost_online}) is infeasible for any $k\in\{0,\dots, H_t - 1\}$, then we add the equivalence class that has higher priority than the existing ones in $S_r(t)$ to the set $S_r(t)$, and solve (\ref{eqn:cost_online}) for all $k\in\{0,\dots, H_t - 1\}$ again. This process is repeated until problem (\ref{eqn:cost_online}) is feasible for all $k\in\{0,\dots, H_t - 1\}$.

Note that we do not consider the inter-sampling effect (i.e., constraint satisfaction within each time interval)
in the CBF-based approach for both online and offline control in this paper. This can be addressed using the method from \cite{Xiao2021CDC}. 
As an alternative, in this paper, we check the satisfaction of all the constraints corresponding to the rule $j\in \mathcal{O}, \forall \mathcal{O}\in R_{\mathcal{O}}\setminus S_r(t)$ through a receding horizon approach introduced below. If all the constraints from rule $ j\in \mathcal{O}, \forall \mathcal{O}\in R_{\mathcal{O}}\setminus S_r(t)$ are satisfied for all $k\in\{0,\dots, H_t - 1\}$, then we set $S_r(t)$ to empty. Otherwise, we find the highest priority level equivalence class in which a corresponding constraint is violated. We set $S_r(t)$ to include all the equivalent classes that are with lower priority than this equivalent class. In this way, this violated constraint is possible to be satisfied in the following time. Eventually, all the constraints corresponding to the rule $j\in \mathcal{O}, \forall \mathcal{O}\in R_{\mathcal{O}}$ will be satisfied. 

After we determine which rules (i.e., $S_r(t)$) need to be relaxed through the receding horizon approach introduced above, we apply the control from solving (\ref{eqn:cost_online}) with $k = 0$ only one time step forward at the current time. This is due to the fact that the state estimation errors of the instances and the error of the predictive model (\ref{eqn:predictive}) with respect to (\ref{eqn:affine}) increases as $k$ increases.  
 The above process is repeated for all the following times. The online rule-based control algorithm is summarized in Alg. \ref{alg:online}.

\begin{algorithm}
\caption{Online optimal control} \label{alg:online}
	\KwIn{ Sensor information, State information for all the instances in the horizon $H_t$, System (\ref{eqn:affine}) with $\bm x(0)$, cost function (\ref{eqn:gcost}), control bound (\ref{eqn:control}), state constraint (\ref{eqn:state}), TORQ $\langle R,\sim_p,\leq_p\rangle$, reference trajectory $\mathcal{X}_r$}
	\KwOut{Online optimal control at each time step}
	$S_r(0) = \{\emptyset\}$\;
	\While{$t\leq T$
	}{
    	1. Solve (\ref{eqn:mpc}) and get $\bm u_{mpc}(k), k\in\{0,\dots,H\}$ \;
    	Find oTORQ as in Sec. \ref{sec:torq_act_deact} based on sensor detection\;
    	2. Get $R$ and $R_{\mathcal{O}}$ from the oTORQ\;
    	3. Get the states of all detected instances within horizon $H_t\leq H$ in $R$\;
    	4. Solve (\ref{eqn:cost_online}) for all $k\in\{0,\dots, H_t\}$ and update (\ref{eqn:affine}) in the corresponding time interval\;
    	\While{$\exists k\in\{0,\dots, H_t\}$ such that (\ref{eqn:cost_online}) is infeasible }{
    	    Add the higher priority level equivalent class to $S_r(t)$\;
        	Solve (\ref{eqn:cost_online}) for all $k\in\{0,\dots, H_t\}$ and update (\ref{eqn:affine}) in the corresponding time interval\;
    	}
    	\eIf{All the constraints from the rule $ j\in \mathcal{O}, \forall \mathcal{O}\in R_{\mathcal{O}}\setminus S_r(t)$ are satisfied for all $k\in\{0,\dots, H_t - 1\}$}
    	{
    	Set $S_r(t) = \{\emptyset\}$\;
    	}
    	{
    	Find the highest-priority-level equivalent class in which a corresponding constraint is violated\;
    	Set $S_r(t)$ to include all the equivalent classes that are with lower priority than this equivalent class\;
    	}
    	5. Solve (\ref{eqn:cost_online}) with $k = 0$ and get $\bm u^*(0)$\;
    	6. Apply $\bm u^*(0)$ at the current time, $t \leftarrow t+\Delta t$\;
	}
\end{algorithm}

\begin{remark}
(Complexity) The complexity of Alg. \ref{alg:online} at each time depends on the horizons $H$ and $H_t$. At each time step $k$, we have $H_t\zeta + 1$ QPs (\ref{eqn:cost_online}) and a NLP (\ref{eqn:mpc}), where $\zeta \in\mathbb{N}$ denotes the number of iterations needed to find a feasible solution for all the $H_t$ QPs (the inner loop in Alg. \ref{alg:online}). The time complexity of the NLP depends on $H$, the cost in (\ref{eqn:mpc}) and the predictive model (\ref{eqn:predictive}). 
\end{remark}

\section{Case Study}
\label{sec:case}

In this section, we apply the methodology developed in this paper to specific vehicle dynamics and various driving scenarios. All the results in this section were produced using a user-friendly tool that allows to create or load maps, place vehicles and pedestrians on the map, and specify ego dynamics. Details on the implementation, including running times are given in Appendix \ref{sec:tool}. 

Ego dynamics \eqref{eqn:affine} are defined with respect to a reference trajectory \cite{Rucco2015}, which measures the along-trajectory distance $s\in\mathbb{R}$ and the lateral distance $d\in\mathbb{R}$ of the vehicle Center of Gravity (CoG) with respect to the closest point on the reference trajectory as follows: 
{\small\begin{equation} \label{eqn:vehicle}
   \underbrace{\left[
\begin{array}[c]{c}
    \dot s\\
    \dot d\\
    \dot \mu\\
    \dot v\\
    \dot a\\
    \dot \delta\\
    \dot \omega
\end{array}
\right]}_{\dot {\bm x}}
=
\underbrace{\left[
\begin{array}
[c]{c}%
    \frac{v\cos(\mu + \beta)}{1 - d\kappa}\\
    v\sin(\mu + \beta)\\
    \frac{v}{l_r}\sin\beta - \kappa\frac{v\cos(\mu + \beta)}{1 - d\kappa}\\
    a\\
    0\\
    \omega\\
    0
\end{array}
\right]}_{f(\bm x)}
+
\underbrace{\left[
\begin{array}[c]{cc}%
    0 & 0\\
    0 & 0\\
    0 & 0\\
    0 & 0\\
    1 & 0\\
    0 & 0\\
    0 & 1
\end{array}
\right]}_{g(\bm x)}
\underbrace{\left[
\begin{array}[c]{c}%
    u_{jerk}\\
    u_{steer}
\end{array}
\right]}_{\bm u},
\vspace{-2pt}
\end{equation}
}where $\mu$ is the vehicle local heading error determined by the difference of the global vehicle heading $\theta\in\mathbb{R}$ in (\ref{eqn:center}) and the tangent angle $\phi\in\mathbb{R}$ of the closest point on the reference trajectory (i.e., $\theta = \phi + \mu$); $v$, $a$ denote the vehicle linear speed and acceleration; $\delta$, $\omega$ denote the steering angle and steering rate, respectively; $\kappa$ is the curvature of the reference trajectory at the closest point; $l_r$ is the length of the vehicle from the tail to the CoG; and $u_{jerk}$, $u_{steer}$ denote the two control inputs for jerk and steering acceleration as shown in Fig. \ref{fig:frame}. $\beta = \arctan\left(\frac{l_r}{l_r + l_f}\tan\delta\right)$ where $l_f$ is the length of the vehicle from the head to the CoG.\vspace{-2pt}
\begin{figure}[thpb]
	\centering
	\floatbox[{\capbeside\thisfloatsetup{capbesideposition={right,top},capbesidewidth=3cm}}]{figure}[\FBwidth]{$\hspace{-8mm}$\caption{Coordinates of ego w.r.t a reference trajectory.}
	\label{fig:frame}}
	{\includegraphics[scale=0.3]{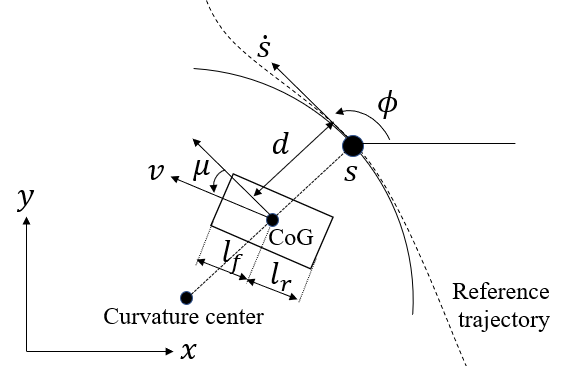}}
\end{figure}

We consider the cost function in \eqref{eqn:cost2} as:
\begin{equation}
     \min_{u_{jerk}(t), u_{steer}(t)}\int_{0}^{T}\left[u_{jerk}^2(t) + u_{steer}^2(t)\right]dt.
\end{equation}

The reference trajectory $\mathcal{X}_r$ is the middle of ego's current lane, and is assumed to be given as an ordered sequence of points $\bm p_1$, $\bm p_2$, $\dots$, $\bm p_{N_r}$, where $\bm p_i \in \mathbb{R}^2, i=1,\dots,N_r$ ($N_r$ denotes the number of points). We can find the reference point $p_{i(t)}$, $i:[0,T]\rightarrow \{1,\ldots,N_r\}$ at time $t$ as:
\vspace{-4pt}
\begin{equation}\label{eqn:tracking}
\begin{aligned}
    i(t)= \begin{cases} i(t) + 1,   ||\bm p(t) - \bm p_{i(t)}||\leq \gamma,\\
     j,  \exists j\in\{1,2,\dots, N_r\}:||\bm p(t) \!-\! \bm p_{i(t)}||\!\geq\! ||\bm p(t) \!-\! \bm p_{j}||,
\end{cases}
\end{aligned}
\end{equation}
where $\bm p(t)\in \mathbb{R}^2$ denotes ego's  location. $\gamma > 0$, and $i(0) = k$ for a $k\in\{1,2,\dots, N_r\}$ is chosen such that $||\bm p(0) - \bm p_{j}||\geq ||\bm p(0) - \bm p_{k}|, \forall j\in\{1,2,\, N_r\}$. Once we get $\bm p_{i(t)}$, we can update the progress $s$, the error states $d,\mu$ and the curvature $\kappa$ in (\ref{eqn:vehicle}).  

For offline control, trajectory tracking means stabilizing the error states $d, \mu$ ($\bm y = (d,\mu)$ in (\ref{eqn:clf1})) to 0, as introduced in Sec. \ref{sec:tracking}. We also wish ego to achieve a desired speed $v_d > 0$ (otherwise, ego may stop in curved lanes). We achieve this by re-defining the CLF $V(\bm x)$ in  (\ref{eqn:clf1}) as $V(\bm x) = ||\bm y||^2 + c_0(v-v_d)^2, c_0 > 0$.  As the relative degree of $V(\bm x)$ w.r.t. (\ref{eqn:vehicle}) is larger than 1, as mentioned in Sec. \ref{sec:tracking}, we use input-to-state linearization and state feedback control \cite{Khalil2002} to reduce the relative degree to one \cite{Xiao2020}. For example, for the desired speed part in the CLF $V(\bm x)$ ( (\ref{eqn:vehicle}) is in linear form from $v$ to $u_{jerk}$, so we don't need to do linearization), we can find a desired state feedback acceleration $\hat a = -k_1(v - v_d), k_1 > 0$. Then we can define a new CLF in the form $V(\bm x) = ||\bm y||^2 + c_0(a -\hat a)^2 = ||\bm y||^2 + c_0(a + k_1(v - v_d))^2$ whose relative degree is just one w.r.t. $u_{jerk}$ in (\ref{eqn:vehicle}). We proceed similarly for driving $d, \mu$ to 0 in the CLF $V(\bm x)$ as the relative degrees of $d, \mu$ are also larger than one.

To provide a fair comparison with the offline case, 
we choose the same discretization time $\Delta T$. We follow Sec. \ref{sec:online_track}, and get the following predictive model with the ADM method (under the first order approximation): 
{\small\begin{equation} \label{eqn:vehicle_mpc}
   \left[\!\!\!\!
\begin{array}[c]{c}
    s(k\!+\!1)\\
    d(k\!+\!1)\\
    \mu(k\!+\!1)\\
    v(k\!+\!1)\\
    a(k\!+\!1)\\
    \delta(k\!+\!1)\\
    \omega(k\!+\!1)
\end{array}
\!\!\!\!\right]
\!\!=\!\!
\left[\!\!\!
\begin{array}
[c]{c}%
    s(k) + \frac{v(k)\cos(\mu(k) + \beta(k))}{1 - d(k)\kappa(k)}\Delta T\\
    d(k) + v(k)\sin(\mu(k) + \beta(k))\Delta T\\
    \mu(k)\! +\! (\frac{v(k)}{l_r}\sin\beta(k) \!-\! \kappa(k)\frac{v(k)\cos(\mu(k) \!+\! \beta(k))}{1 \!-\! d(k)\kappa(k)})\Delta T\\
    v(k) + a(k)\Delta T + \frac{1}{2}u_{jerk}(k)\Delta T^2\\
    a(k) + u_{jerk}(k)\Delta T\\
    \delta(k) + \omega(k)\Delta T + \frac{1}{2}u_{steer}(k)\Delta T^2 \\
    \omega(k) + u_{steer}(k)\Delta T
\end{array}
\!\!\!\right]
\vspace{-2pt}
\end{equation}
}where $k = 0, 1, 2,\dots,$ and $\kappa(k)$ is obtained through regression of the reference points  from $\mathcal{X}_r$ (usually the lane center line) using splines.

The control bounds (\ref{eqn:control}) and state constraints (\ref{eqn:state}) are given by:
\begin{equation}\label{eqn:physical}
    \begin{aligned}
    \text{speed constraint: }&  v_{\min} \leq v(t)\leq v_{\max},\\
    \text{acceleration constraint: }&  a_{\min}\leq a(t)\leq a_{\max},\\
    \text{jerk control constraint: }& u_{j,\min}\leq u_{jerk}(t)\leq u_{j,\max},\\
    \text{steering angle constraint: }& \delta_{\min}\leq \delta(t)\leq \delta_{\max},\\
    \text{steering rate constraint: }& \omega_{\min}\leq \omega(t)\leq \omega_{\max},\\
    \text{steering control constraint: }& u_{s,\min}\leq u_{steer}(t)\leq u_{s,\max},
    \end{aligned}
\end{equation}

We consider the TORQ $\langle R,\sim_p,\leq_p\rangle$ from Fig. \ref{fig:case_rb}, with rules $R = \{r_1, r_2, r_3, r_4, r_5, r_6, r_7, r_8\}$, where $r_1$ is a pedestrian clearance rule; $r_2$ and $r_3$ are clearance rules for staying in the drivable area and lane, respectively; $r_4$ and $r_5$ are non-clearance rules specifying maximum and minimum speed limits, respectively; $r_6$ is a comfort non-clearance rule; and $r_7$ and $r_8$ are clearance rules for parked and moving vehicles, respectively. The formal rule definitions (statements, violation metrics) are given in Appendix \ref{sec:app-rule-def}. These metrics are used to compute the scores for all the trajectories in the three scenarios below. 
The optimal disk coverage from Sec. \ref{sec:rule-approx} is used to compute the optimal controls for all the clearance rules, which are implemented using HOCBFs.
 \begin{figure}[thpb]
	\centering
	\floatbox[{\capbeside\thisfloatsetup{capbesideposition={right,top},capbesidewidth=3cm}}]{figure}[\FBwidth]{\caption{TORQ priority structure for case studies. 
	}
	\label{fig:case_rb}}
	{\includegraphics[scale=0.25]{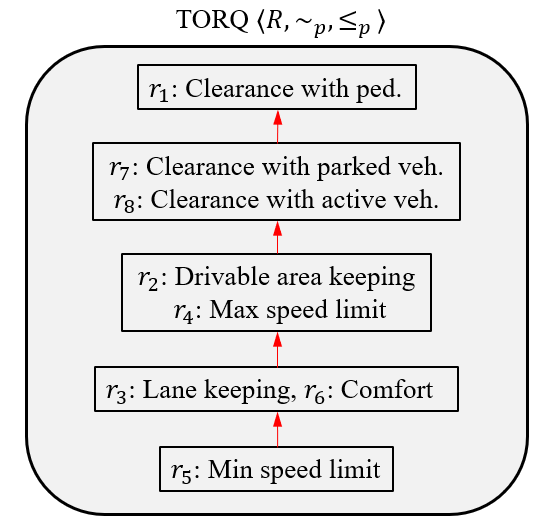}}
		\vspace{-3mm}
\end{figure}

In the following, we consider three common driving scenarios. For each of them, we solve the optimal control Problem \ref{prob:main} with both the online and offline methods, and perform pass/fail evaluation. In all three scenarios, in the pass/fail evaluation, an initial candidate trajectory is drawn ``by hand" using the tool described in Appendix \ref{sec:tool}. For offline control, we use CLFs to generate a feasible trajectory $\mathcal{X}_c$ that tracks the candidate trajectory subject to the vehicle dynamics (\ref{eqn:affine}), control bounds (\ref{eqn:control}) and state constraints (\ref{eqn:state}). For the online control method, we assume the sensor sensing range is $20m$ and set the horizon $H = H_t = 10$s.

\subsection{Tracking comparison}
\label{sec:track-comp}

We first compare the tracking performances of the CLF-based offline and MPC-based online controllers proposed above, and discuss their advantages and disadvantages. We consider a curved road with a single lane as shown in Fig. \ref{fig:traj}. The CLF method is implemented as a sequence of QPs (one at each time step, solved using the quadprog solver in Matlab). The MPC method is reduced to a sequence of nonlinear programs (solved using the fmincon solver in Matlab). 

\begin{figure}[thpb]
	\centering
	\vspace{-3mm}
	\includegraphics[scale=0.45]{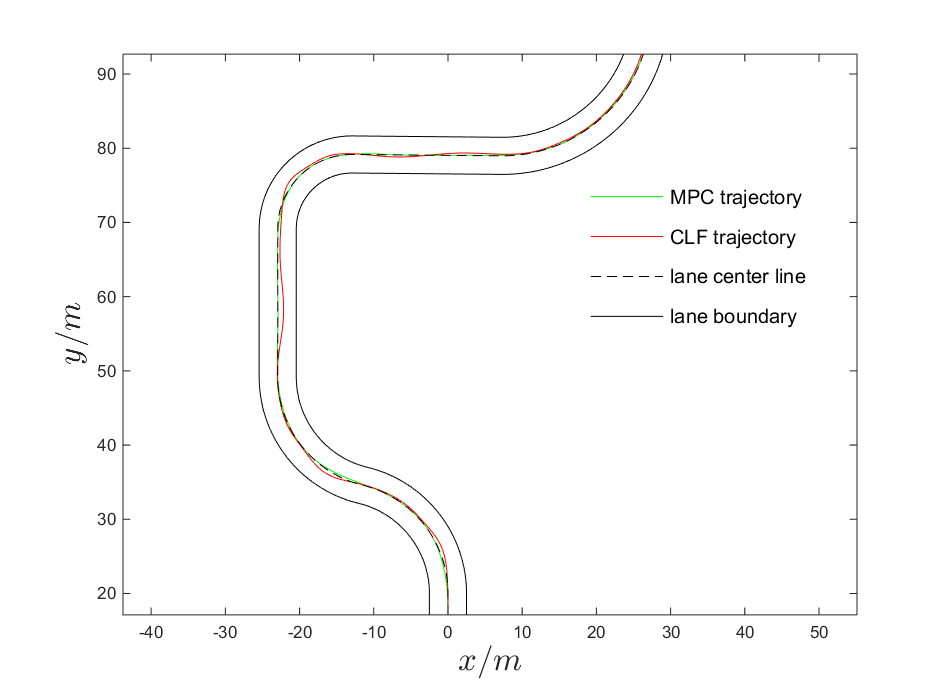}
	\vspace{-3mm}
	\caption{Tracking trajectory comparison between the CLF-based offline and MPC-based online methods.
	}
	\label{fig:traj}%
	\vspace{-3mm}
\end{figure}

\begin{figure}[thpb]
	\centering
	\vspace{0mm}
	\includegraphics[scale=0.6]{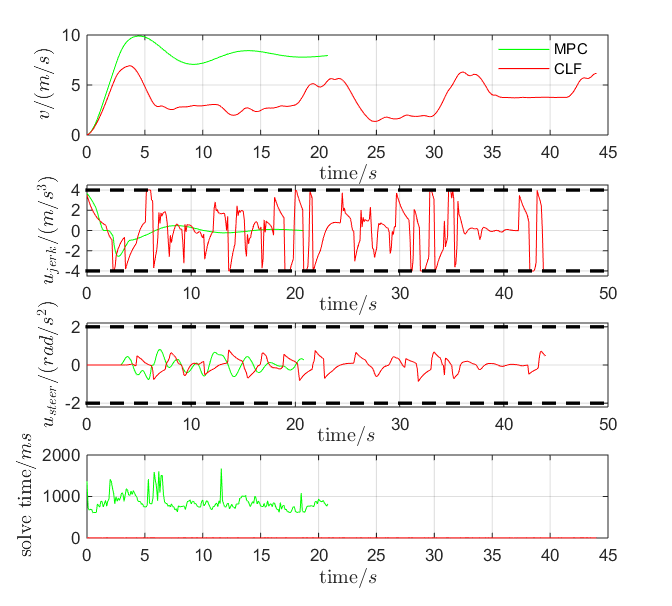}
	\vspace{-3mm}
	\caption{Speed, control and computation time profiles for the CLF and MPC methods. The base time for the MPC curves is shorter as tracking finishes earlier (the ego speed is higher).
	}
	\label{fig:track_state}%
	\vspace{-3mm}
\end{figure}

As it can be seen in Fig. \ref{fig:traj}, the MPC controller can track the lane center line with less error than the CLF controller. This makes sense, as the MPC method uses a receding horizon, while the CLF method enforces converge during each (short) time interval. Moreover, the MPC method produces a trajectory with a higher speed than the CLF method (see Fig. \ref{fig:track_state}). The main disadvantage of the MPC method is that it is more difficult to solve than the CLF method, and has much higher computation time
(between $1s$ and $2$) than the CLF method (about $4ms$), as shown in the last frame of Fig. \ref{fig:track_state} (the CLF times are not distinguishable on the flat red line). Note that we can significantly reduce the computation time for the MPC method if we use more powerful solvers, such as FORCES (Embotech).

\subsection{Scenario 1}


In this scenario, the traffic instances are: (1) an active vehicle, which moves in the same direction as ego in an adjacent lane, (2) a parked vehicle, which takes some space in the ego's lane, and (3) a static pedestrian (see Fig. \ref{fig:case12}). 

\textbf{Offline optimal control:} 
We solve the optimal control problem (\ref{eqn:cost2}) by starting the rule relaxation with $S_1=\{\emptyset\}$ (i.e., without relaxing any rule). This problem is infeasible in the given scenario since ego could not maintain the required distance between both the active and the parked vehicles as the clearance rules are speed-dependent, and $r_5$ imposes a minimum speed limit. Therefore, we relaxe the next lowest priority equivalence class set in $S_{sorted}$, i.e., the minimum speed limit rule in $S_2=\{\{r_{5}\}\}$. With this relaxation, we are able to find a feasible trajectory as illustrated in Fig. \ref{fig:case12} (left). 
By checking $\delta_i$ for $r_5$ from \eqref{eqn:cost2}, we find that it was positive in some time intervals in $[0,T]$, and thus, $r_5$ is indeed relaxed. The total violation score for rule $r_{5}$ for the generated trajectory is 0.539, and all the other rules in $R$ are satisfied. Therefore, by Def. \ref{def:rb_satisfy}, the generated trajectory satisfies the TORQ $\langle R,\sim_p,\leq_p\rangle$ from Fig. \ref{fig:case_rb}.
 
 \begin{figure}[thpb]
	\centering
	\vspace{-3mm}
$\hspace{-6mm}$\includegraphics[scale=0.5]{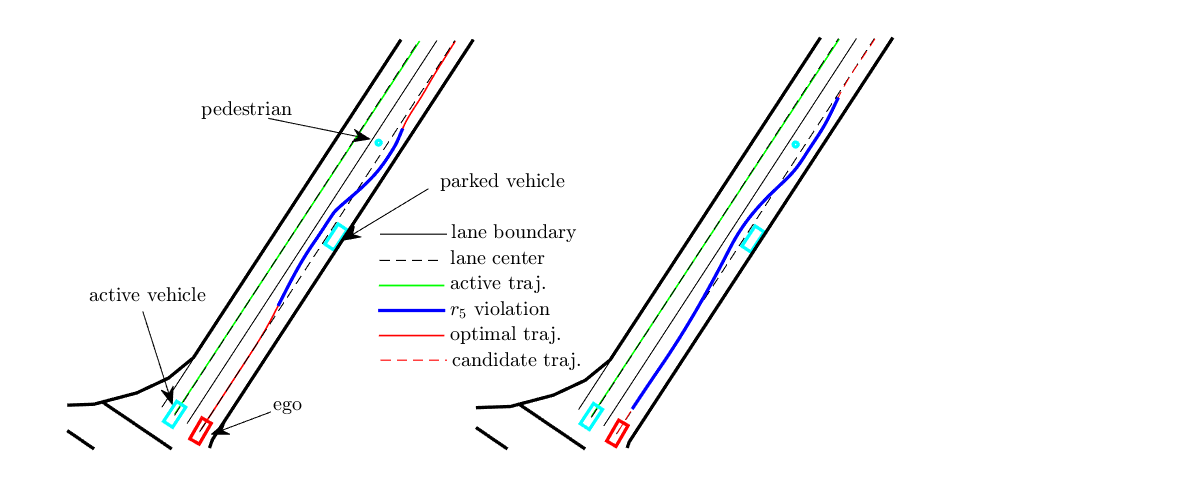}
	\vspace{-3mm}
	\caption{Offline optimal control (left) and Pass/Fail (right) for Scenario 1: the subsets of offline optimal and candidate trajectories violating $r_5$ are shown in blue. The alternative trajectory for Pass/Fail (right) is the same as the offline optimal control (left).
	}
	\label{fig:case12}%
	\vspace{-3mm}
\end{figure}
 

\textbf{Pass/Fail:} The candidate trajectory $\mathcal{X}_c$ is shown in Fig. \ref{fig:case12} (right). It only violates rule $r_5$ with a total violation score of  0.682. Following the method from Sec. \ref{sec:p/f}, a better trajectory would either not violate any rules or violate only $r_5$. As shown in the above off-line optimal control problem for this scenario, we cannot find a feasible solution if we do not relax $r_5$. Since the violation of $r_5$ by the candidate trajectory is larger than 0.539 (the score of the optimal trajectory from Fig. \ref{fig:case12} (left)), we fail the candidate trajectory. 

\textbf{Online vs. offline optimal control:} We first consider online optimal control for exactly the same setting as in the offline case (Fig. \ref{fig:case12} (left)). We assume that ego only has local information about traffic instances in a disk centered at the ego and with radius. 
If we require that ego follow the TORQ in Fig. \ref{fig:case_rb}, then it stops
before the parked vehicle, because $r_3$ (lane keeping) has higher priority than $r_5$ (minimum speed limit). Thus, ego would violate $r_5$ first, which leads it to stop.  
This shows that the online control method is more conservative than the offline one. If we relax $r_3$, which does not follow the TORQ and we do not do it here, it can be shown that the problem is feasible (this will be shown in Case 3). In order to have a fair comparison between the offline and the online cases, and make ego pass the parked vehicle in the online control case, we set the location of the parked vehicle a little off the lane center line, as shown in Fig. \ref{fig:case1_compare}. Note that this would not be necessary if we had a ``reach goal" rule. 
Although the trajectories from the online and offline control are close to each other, the violation metric  of $r_5$ for the online control (0.6219) is worse than the offline one (0.4410). This further shows that the online method is more conservative as it only uses local information, which results in lower moving speed. In other words, with more information, we can find a better solution in the offline case. 
\begin{figure}[thpb]
	\centering
	$\hspace{-4mm}$\includegraphics[scale=0.45]{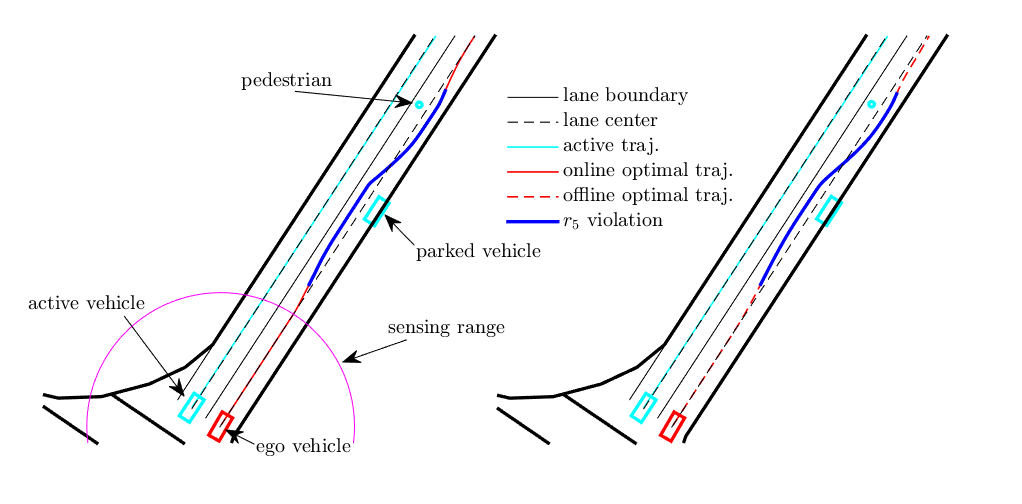}
	\vspace{-5mm}
	\caption{Online (left) vs. offline (right) control for Scenario 1: the subsets of online and offline optimal trajectories violating $r_5$ are shown in blue.}
	\label{fig:case1_compare}
	\vspace{-3mm}
\end{figure}

\subsection{Scenario 2}

In this scenario, the traffic instances are: (1) one active vehicle, which moves in the same direction with ego in an adjacent lane, (2) two parked vehicles, and (3) two static pedestrians (see Fig. \ref{fig:case2}). 

\textbf{Offline optimal control:} Similar to Scenario 1, the optimal control problem (\ref{eqn:cost2}) starting with $S_1=\{\emptyset\}$ (without relaxing any rules in $R$) is infeasible. We relax the lowest priority rule set in $S_{sorted}$, i.e., the minimum speed rule in $S_2=\{\{r_{5}\}\}$, for which we are able to find a feasible trajectory as illustrated in Fig. \ref{fig:case2}. Again, the $\delta_i$ for $r_5$ is positive in some time intervals in $[0,T]$, and thus, $r_5$ is indeed relaxed. The total violation score of $r_{5}$ for the generated trajectory is 0.646, and all the other rules in $R$ are satisfied. 

\begin{figure}[thpb]
	\centering
	\vspace{1mm}
	\floatbox[{\capbeside\thisfloatsetup{capbesideposition={right,top},capbesidewidth=3cm}}]{figure}[\FBwidth]{$\hspace{-9mm}$\vspace{-4mm}\caption{Offline optimal control for Scenario 2: the subset of the optimal ego trajectory violating $r_5$ is shown in blue.}
	\label{fig:case2}}
	{$\hspace{-9mm}$\includegraphics[scale=0.5]{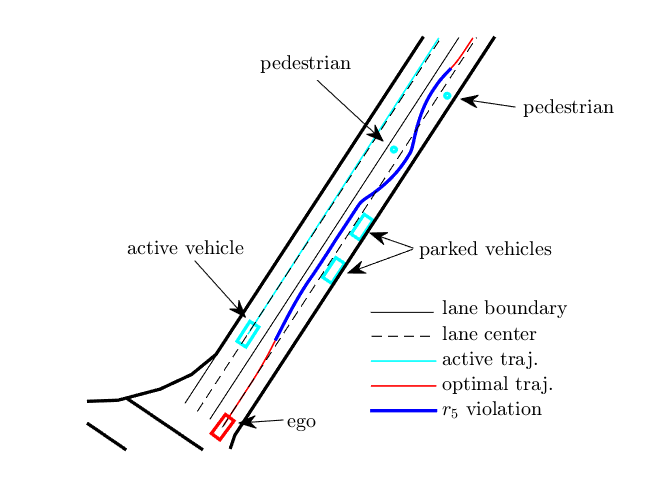}}
	\vspace{-4mm}
\end{figure}

\textbf{Pass/Fail:} The candidate trajectory $\mathcal{X}_c$ shown in Fig. \ref{fig:case2_pf} (left) violates rules $r_1$ (clearance with pedestrian), $r_{3}$ (lane keeping), and $r_{8}$ (clearance with active vehicles) with total violation scores of 0.01, 0.23, and 0.22 (see \eqref{eqn:r_1} for the definition of violation score for $r_1$ and \cite{Xiao2021ICCPS} for the other rules). 

In this scenario, if we followed the procedure from Sec. \ref{sec:p/f}, we would start by relaxing $r_1$, which may lead to a better trajectory than the candidate one (starting from the highest priority equivalence class in $S_{sorted}$ in Alg. \ref{alg:sort} in order to quickly get a solution), which would not give us a desirable trajectory as it could still violate the highest priority rule $r_1$. In order to obtain a desirable trajectory, we show the iteration of relaxing the rules in the equivalence classes $\mathcal{O}_2=\{r_{3}, r_6\}$ and $\mathcal{O}_1=\{r_{5}\}$ to find a feasible trajectory that is also better than the candidate one.
The optimal control problem (\ref{eqn:cost2}) generates the red-solid curve shown in Fig. \ref{fig:case2_pf}, and only the $\delta_6$ (the relaxation for $r_6$) is 0 for all $[0,T]$. Thus, $r_6$ does not need to be relaxed. The generated trajectory violates rules $r_{3}$ and $r_{5}$ with total violation scores 0.124 and 0.111, respectively, but satisfies all the other rules including the highest priority rule $r_1$. According to Def. \ref{def:traj_cmp} for the given $\langle R,\sim_p,\leq_p\rangle$ in Fig. \ref{fig:case_rb}, the new generated trajectory is better than the candidate one, and we fail the candidate trajectory. Note that although this trajectory violates the lane keeping rule, it has a smaller violation score for $r_{5}$ compared to the trajectory obtained from the optimal control in Fig. \ref{fig:case2} (0.111 v.s. 0.646), i.e., the average speed of ego in the red-solid trajectory in Fig. \ref{fig:case2_pf} is larger.
\begin{figure}[thpb]
\centering
	\vspace{-3mm}
    $\hspace{-2mm}$\includegraphics[scale=0.45]{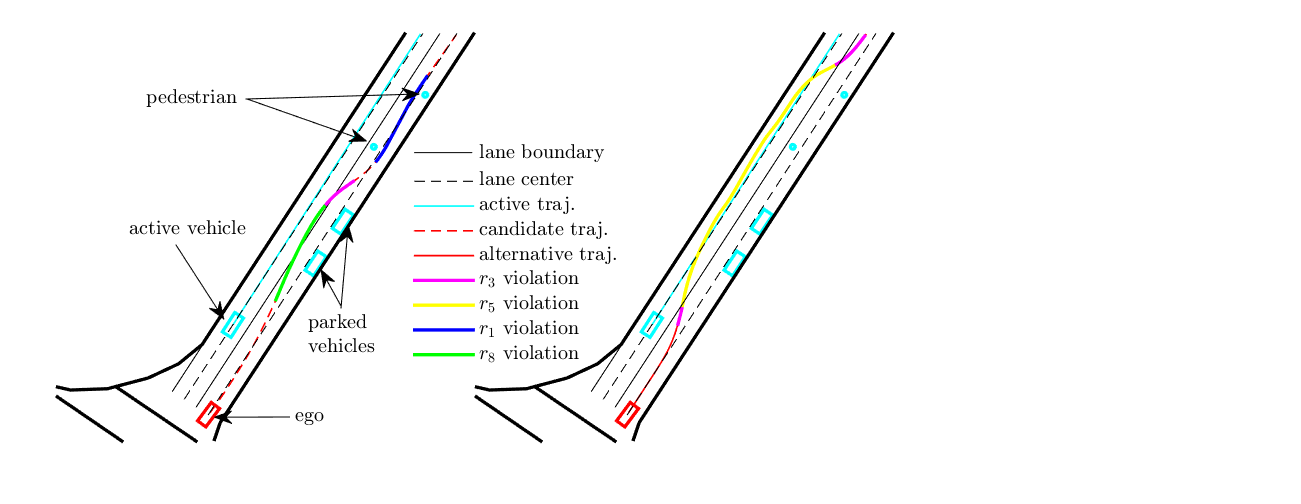}
	\vspace{-6mm}
	\caption{Pass/Fail for Scenario 2:  the subsets of the candidate trajectory (left) violating $r_8, r_3, r_1$ are shown in green, magenta and blue, respectively; the subsets of the alternative trajectory (right) violating $r_5$, and $r_3$ are shown in yellow and magenta, respectively.} 
	\label{fig:case2_pf}
	\vspace{-3mm}
\end{figure}

\textbf{Online vs. offline optimal control:} 
 Similar to Scenario 1, in order to make the ego pass the parked vehicles in the online control case, we set the locations of the parked vehicles a little off the lane center line, as shown in Fig. \ref{fig:case2_compare}. Although the trajectories from the online and offline control are close to each other, the total violation metric for $r_5$ for the online case (0.7221) is worse than the offline one (0.5280). Again, this shows that the online method is more conservative as it only has local information. 
\begin{figure}[thpb]
	\centering
	$\hspace{-4mm}$\includegraphics[scale=0.45]{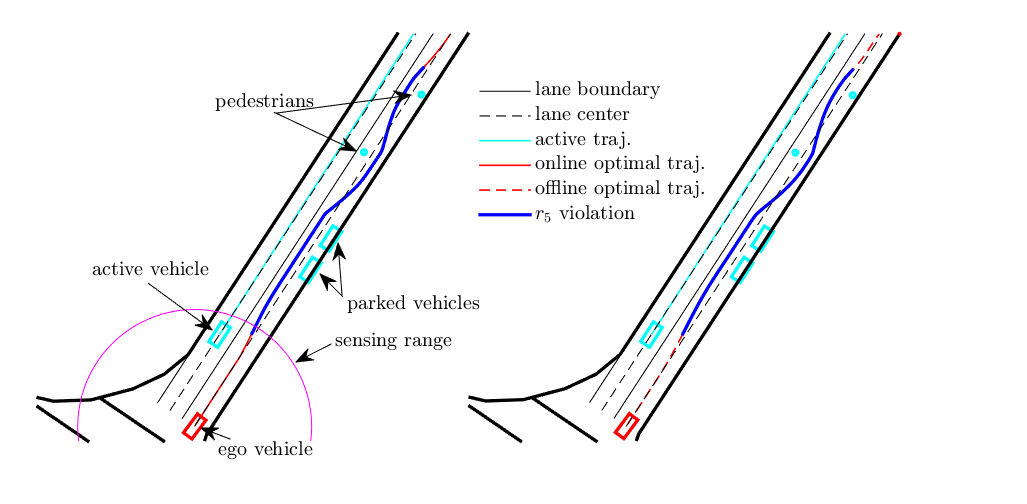}
	\vspace{-5mm}
	\caption{Online (left) vs. offline control for Scenario 2: the subsets of online and offline optimal trajectories violating $r_5$ are shown in blue.}
	\label{fig:case2_compare}
	\vspace{-3mm}
\end{figure}

\subsection{Scenario 3}

In this scenario, the traffic instances are: (1) one active vehicle, which moves in the same direction as ego in an adjacent lane, (2) one parked vehicle, and (3) two static pedestrians, one of which just got out of the parked car (see Fig. \ref{fig:case3}). 

\textbf{Offline optimal control:} Similar to Scenario 1, the optimal control problem (\ref{eqn:cost2}) starting from $S_1=\{\emptyset\}$ (without relaxing any rules in $R$) is infeasible. We relax the lowest priority rule set in $S_{sorted}$, i.e., the minimum speed rule $S_2=\{\{r_{5}\}\}$, and solve the optimal control problem. In the (feasible) generated trajectory, ego stops before the parked vehicle, which satisfies all the rules in $R$ except $r_{5}$. Thus, by Def. \ref{def:rb_satisfy}, the generated trajectory satisfies the TORQ $\langle R,\sim_p,\leq_p\rangle$. However, this is not a desirable behavior.~\footnote{Note that
such an undesirable behaviour would not satisfy the TORQ
if we had a ``reach goal" rule with priority higher than $r_3$.} We further relax the lane keeping $r_{3}$ and comfort $r_6$ rules and find the feasible trajectory shown in Fig. \ref{fig:case3}. $\delta_i$ for $r_6$ is 0 for all $[0,T]$; $r_6$ does not need to be relaxed. The total violation scores for rules $r_{3}$ and $r_{5}$ are 0.058 and 0.359, respectively, and all other rules in $R$ are satisfied.
\begin{figure}[thpb]
  \vspace{3mm}
	\centering
	\floatbox[{\capbeside\thisfloatsetup{capbesideposition={right,top},capbesidewidth=3cm}}]{figure}[\FBwidth]{$\hspace{-9mm}\vspace{-4mm}$\caption{Offline optimal control for Scenario 3: the subsets of optimal ego trajectory violating $r_5$ and $r_3$ are shown in blue and green, respectively.}
	\label{fig:case3}}
	{$\hspace{-9mm}$\includegraphics[scale=0.5]{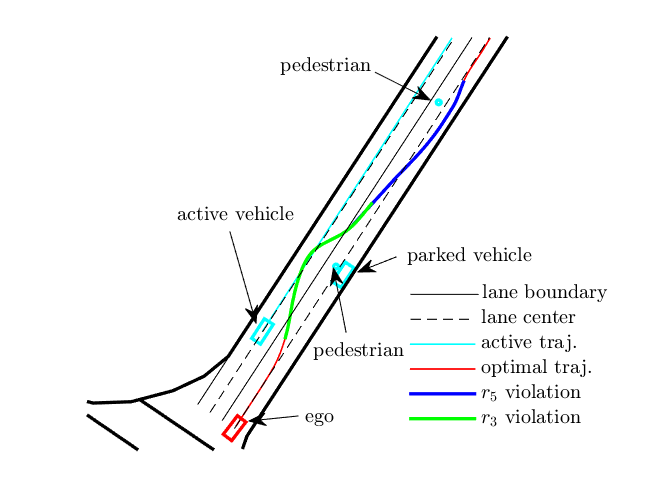}}
	\vspace{-7mm}
\end{figure}

\textbf{Pass/Fail:} The candidate trajectory $\mathcal{X}_c$ shown in Fig. \ref{fig:case3_pf} violates rules $r_{3}$ and $r_{8}$ with total violation scores of 0.025 and 0.01, respectively. In this scenario, from the offline optimal control in Fig. \ref{fig:case3}, we have that ego would stop before the parked vehicle if we followed the procedure from Sec. \ref{sec:p/f}. Thus, in order to show a desirable alternative trajectory, 
we show the iteration of relaxing the rules in the equivalence classes $\mathcal{O}_2=\{r_{3}, r_6\}$ and $\mathcal{O}_1=\{r_{5}\}$ (all have lower priorities than $r_{8}$). The optimal control problem (\ref{eqn:cost2}) generates the red-solid curve shown in Fig. \ref{fig:case3_pf}. By checking $\delta_6$ for $r_6$, we find that $r_6$ is indeed not relaxed. The generated alternative trajectory violates rules $r_{3}$ and $r_{5}$ with total violation scores 0.028 and 0.742, respectively, but satisfies all the other rules including $r_{8}$. According to Def. \ref{def:traj_cmp} and Fig. \ref{fig:case_rb}, the new generated trajectory (although violates $r_{3}$ more than the candidate trajectory, it does not violate $r_{8}$ which has a higher priority) is better than the candidate one. We fail the candidate trajectory. 

\begin{figure}[thpb]
	\centering
$\hspace{-3mm}$\includegraphics[scale=0.485]{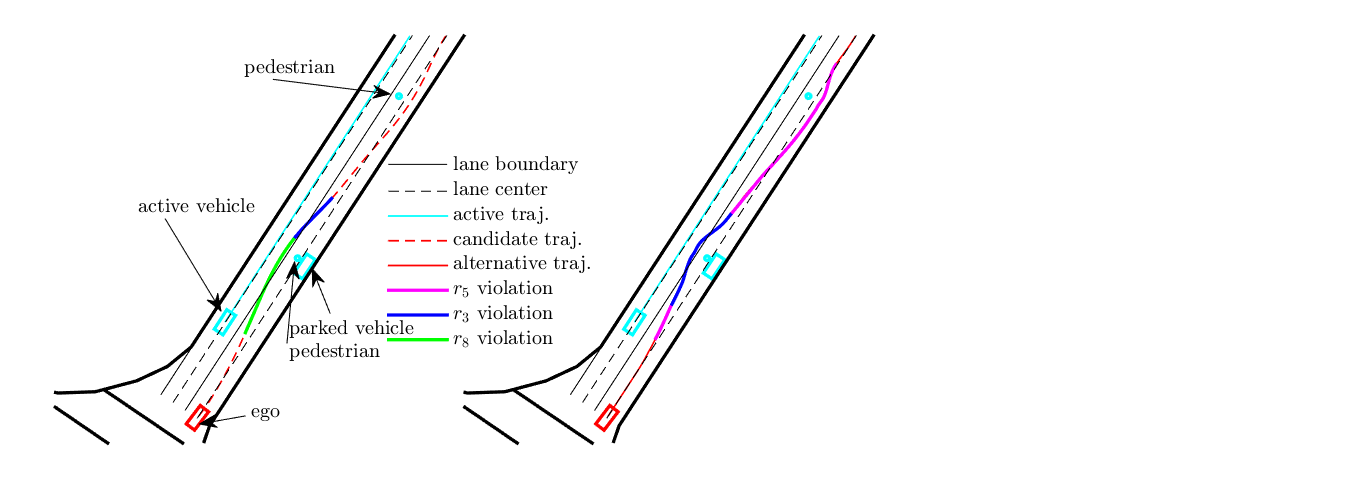}
	\vspace{-6mm}
	\caption{Pass/Fail for Scenario 3: the subsets of the candidate trajectory (left) violating $r_8, r_3$ are shown in green and blue, respectively; the subsets of the alternative trajectory (right) violating $r_5, r_3$ are shown in magenta and blue, respectively. 
	}
	\label{fig:case3_pf}
	\vspace{-3mm}
\end{figure}

\textbf{Online vs. offline optimal control:} 
In this case, the ego can not pass the parked vehicle and the pedestrian near it for both the online and offline methods if it follows the Algs. \ref{alg:sort} or \ref{alg:online} and the TORQ in Fig. \ref{fig:case_rb}. To have a fair comparison between the offline and the online cases and to get desirable trajectories, we relax the lane keeping rule $r_3$ (which does not follow the TORQ in Fig. \ref{fig:case_rb}).
As shown in Fig. \ref{fig:case3_compare}, the trajectory from the online control violates $r_3$ and $r_5$ with violation scores 0.0580 and 0.6034, respectively, while the violation scores of $r_3$ and $r_5$ for the offline control method are 0.0577 and 0.3594, respectively. The offline control method has less violations scores than the online control, which again shows that the online method is more conservative. 

All the three scenarios above show that ego may not have a reasonable driving behavior based on the current TORQ in Fig. \ref{fig:case_rb}. In order to address this problem, we may add more rules. For example, we may define a new reach-goal rule that has higher priority than $r_3$ and has lower priority than $r_2$ and $r_4$. The completeness of a TORQ for a reasonable driving behavior is subject of current research.

\begin{figure}[thpb]
	\centering
	$\hspace{-4mm}$\includegraphics[scale=0.45]{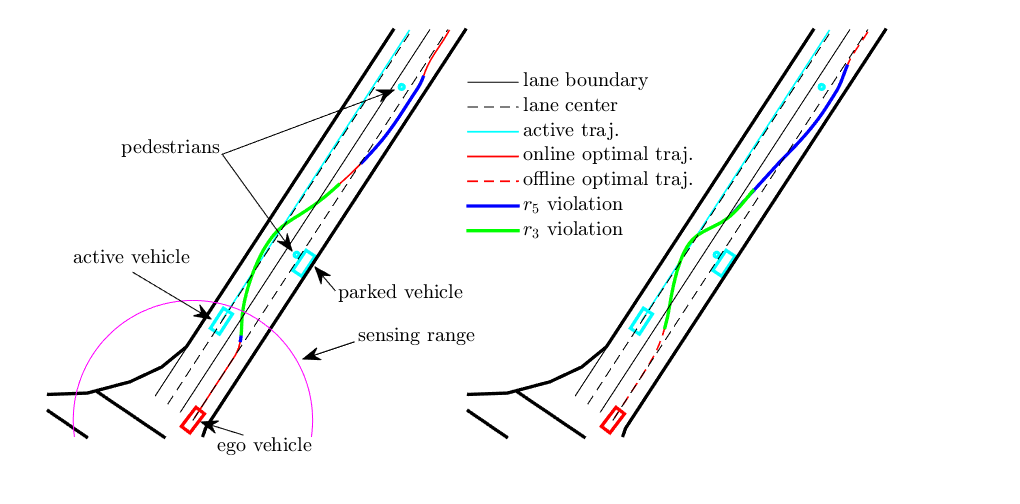}
	\vspace{-5mm}
	\caption{Online versus offline control for Scenario 3: the subsets of online and offline optimal trajectories violating $r_3, r_5$ are shown in green and blue, respectively. }
	\label{fig:case3_compare}
	\vspace{-3mm}
\end{figure}

\section{Conclusions and Future Work}
\label{sec:con}

We developed a framework to design optimal control strategies for autonomous vehicles that aim to satisfy a set of driving rules with a given priority structure, while following a reference trajectory and satisfying control and state limitations. We showed that, for commonly used driving rules, by using control barrier functions and control Lyapunov functions, the problem can be cast as an iteration of optimal control problems, where each iteration involves a sequence of quadratic programs. We proposed offline and online versions of the control problem, where the latter is based on local information. We showed that the offline algorithm can be used to pass/fail possible autonomous vehicle behaviors against prioritized driving rules. We also showed that our receding-horizon implementation of the online algorithm achieves better tracking. We presented multiple case studies for an autonomous vehicle with realistic dynamics and conflicting rules. Future work will focus on learning priority structures from data, improving the feasibility of the control problems using machine learning and optimization techniques, 
and refinement of the pass/fail procedure.

\bibliographystyle{IEEEtran}
\bibliography{Biblio}

\section*{APPENDIX}
\section{Rule definitions}\label{sec:app-rule-def}
Formal definitions for all the rules used 
in Sec. \ref{sec:case} can be found in \cite{Xiao2021ICCPS}. According to Def. \ref{def:rule}, 
each rule should be satisfied for all times. Here we include only $r_1$ for illustration. 

\begin{equation}
    \begin{aligned} \label{eqn:r_1}
    r_1: &\text{ Maintain clearance with pedestrians}\\
    &\text{Statement: } d_{min,fp}(\bm x, \bm x_i)\geq d_{1} + v(t)\eta_1,\forall i\in S_{ped}\\
     &\varrho_{r,i}(\bm x(t)) = \max(0, \frac{d_{1} + v(t)\eta_1 - d_{min,fp}(\bm x, \bm x_i)}{d_{1} + v_{max}\eta_1})^2,\\
     &\rho_{r,i}(\mathcal{X}) = \max_{t\in[0,T]} \varrho_{r,i}(\bm x(t)), \quad
     P_r = \sqrt{\frac{1}{n_{ped}}\sum_{i\in S_{ped}}\rho_{r,i}}.
    \end{aligned}
\end{equation}
where $d_{min,fp}:\mathbb{R}^{n+n_i}\rightarrow {\mathbb{R}}$
denotes the distance between footprints of ego and pedestrian $i$, and $d_{min,fp}(\cdot,\cdot) < 0$ denotes the footprint overlap.  The clearance threshold is given based on a fixed distance $d_1\geq 0$ and increases linearly by $\eta_1 > 0$ based on ego speed $v(t) \geq 0$ ($d_1$ and $\eta_1$ are determined empirically). $S_{ped}$ denotes the index set of all pedestrians, and $\bm x_i\in \mathbb{R}^{n_i}$ denotes the state of pedestrian $i$. $v_{max}$ is the maximum feasible speed of the vehicle and is used to define the normalization term in $\varrho_{r,i}$, which assigns a violation score (based on a L-$2$ norm) if formula is violated by $\bm x(t)$. $\rho_{r,i}$ defines the instance violation score as the most violating instant over $\mathcal{X}$. $P_r$ aggregates the instance violations, where $n_{ped}\in\mathbb{N}$ denotes the number of pedestrians.

\section{Optimal Disk Coverage}\label{sec:app-coverage}

To construct disks to fully cover the clearance regions, we need to find their number and radius. From Fig. \ref{fig:proof}, the lateral approximation error $\sigma >0$ is given by:
\begin{equation}\label{eq:sigma}
    \sigma = \mathfrak{r} - \frac{w + h_l(\bm x) + h_r(\bm x)}{2}.
\end{equation}
Since $\sigma$ for ego depends on its state $\bm x$ (speed-dependent), we consider the accumulated lateral approximation error for all possible $\bm x\in X$. This allows us to determine $z$ and $\mathfrak{r}$ such that the disks fully cover ego clearance region for all possible speeds in $\bm x$. Let $\bar h_i = \sup_{\bm x\in X}h_i(\bm x), \underline{h}_i = \inf_{\bm x\in X}h_i(\bm x), i\in\{f,b,l,r\}$. We can formulate the construction of the approximation disks as an optimization problem:

\begin{equation}\label{eqn:opcircle}
    \min_{z} z + \beta \int_{\underline{h}_f}^{\bar h_f}\int_{\underline{h}_b}^{\bar h_b}\int_{\underline{h}_l}^{\bar h_l}\int_{\underline{h}_r}^{\bar h_r}\sigma dh_f(\bm x)dh_b(\bm x)dh_l(\bm x)dh_r(\bm x)
\end{equation}
subject to $
    z \in \mathbb{N},
    $
where $\beta\geq 0$ is a trade-off between minimizing the number of the disks (so as to minimize the number of constraints considered with CBFs) and the coverage approximation error. The above problem is solved offline. 

A similar optimization is formulated for construction of disks for instances in $S_p$ (we remove the integrals due to speed-independence). Note that for the driving scenarios studied in this paper, we omit the longitudinal approximation errors in the front and back. The lateral approximation errors are considered in the disk formulation since they induce conservativeness in the lateral maneuvers of ego required for surpassing other instances (such as parked car, pedestrians, etc.), see Sec. \ref{sec:case}.

Let $(x_e,y_e) \in\mathbb{R}^2$ be the center of ego and $(x_i,y_i) \in\mathbb{R}^2$ be the center of instance $i\in S_p$. The center of disk $j$ for ego $(x_{e,j}, y_{e,j}), j\in \{1,\dots,z\}$ is determined by:
\begin{equation} \label{eqn:center}
\begin{aligned}
    x_{e,j} = x_e + \cos\theta_e(-\frac{l}{2} - h_b(\bm x) + \frac{l+h_f(\bm x) + h_b(\bm x)}{2z}(2j - 1))\\
    y_{e,j} = y_e + \sin\theta_e(-\frac{l}{2} - h_b(\bm x) + \frac{l+h_f(\bm x) + h_b(\bm x)}{2z}(2j - 1))
\end{aligned}
\end{equation}
where $j\in\{1,\dots, z\}$ and $\theta_e\in\mathbb{R}$ denotes the heading angle of ego. The center of disk $k$ for instance $i \in S_p$ denoted by $(x_{i,k}, y_{i,k}), k\in \{1,\dots,z_i\}$, can be defined similarly. 

\begin{theorem} \label{thm:cover}
If the clearance regions of ego and instance $i\in S_p$ are covered by the disks constructed by solving (\ref{eqn:opcircle}), then the clearance regions of ego and instance $i$ do not overlap if (\ref{eqn:rule_cons}) is satisfied. 
\end{theorem}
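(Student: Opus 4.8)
The plan is to read the statement as a purely geometric fact: the coverage hypothesis reduces ``clearance regions do not overlap'' to ``the covering disks do not overlap,'' and the latter is an immediate consequence of (\ref{eqn:rule_cons}) via the triangle inequality. First I would fix notation. Let $C_e\subset\mathbb{R}^2$ denote ego's rectangular clearance region and let $D_{e,j}:=\{\bm q\in\mathbb{R}^2 : ||\bm q - (x_{e,j},y_{e,j})||\le \mathfrak{r}\}$, $j\in\{1,\dots,z\}$, be the closed disks with centers (\ref{eqn:center}) and common radius $\mathfrak{r}$ from (\ref{eqn:radius}). By the hypothesis of the theorem (the disks obtained from (\ref{eqn:opcircle}) cover the region), $C_e\subseteq\bigcup_{j=1}^{z}D_{e,j}$; analogously, the closed disks $D_{i,k}$, $k\in\{1,\dots,z_i\}$, of radius $\mathfrak{r}_i$ centered at $(x_{i,k},y_{i,k})$ satisfy $C_i\subseteq\bigcup_{k=1}^{z_i}D_{i,k}$ for the clearance region $C_i$ of instance $i$.

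The core step is to show that, under (\ref{eqn:rule_cons}), each ego disk is interior-disjoint from each instance disk. Fix any $j\in\{1,\dots,z\}$, $k\in\{1,\dots,z_i\}$, and suppose toward a contradiction that some point $\bm q$ lies in $D_{e,j}\cap D_{i,k}$ and in the interior of at least one of them. Then $||\bm q - (x_{e,j},y_{e,j})||\le\mathfrak{r}$ and $||\bm q - (x_{i,k},y_{i,k})||\le\mathfrak{r}_i$ with at least one inequality strict, so by the triangle inequality $||(x_{e,j},y_{e,j}) - (x_{i,k},y_{i,k})||\le ||(x_{e,j},y_{e,j}) - \bm q|| + ||\bm q - (x_{i,k},y_{i,k})|| < \mathfrak{r}+\mathfrak{r}_i$, contradicting (\ref{eqn:rule_cons}). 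Hence $D_{e,j}$ and $D_{i,k}$ can meet only along a subset of both bounding circles; in particular their interiors are disjoint, for every pair $(j,k)$.

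Finally I would assemble these facts. Since $\bigl(\bigcup_j D_{e,j}\bigr)\cap\bigl(\bigcup_k D_{i,k}\bigr)=\bigcup_{j,k}\bigl(D_{e,j}\cap D_{i,k}\bigr)$ is a finite union of sets each with empty interior, it has empty interior; together with $C_e\subseteq\bigcup_j D_{e,j}$ and $C_i\subseteq\bigcup_k D_{i,k}$ this shows that $C_e$ and $C_i$ share no interior point, i.e., the clearance regions do not overlap. (If one prefers to regard the clearance regions as open from the outset, the same chain of inequalities gives directly $C_e\cap C_i=\emptyset$.)

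I do not anticipate a real obstacle: once the coverage hypothesis is in hand, the proof is a single application of the triangle inequality. The only point requiring a touch of care is the precise meaning of ``do not overlap'': with the non-strict inequality in (\ref{eqn:rule_cons}), disjointness is of interiors (equivalently, up to a measure-zero boundary contact), which is exactly what is needed to keep the HOCBF constraints non-conservative. The genuinely nontrivial content -- that the centers (\ref{eqn:center}) with radius (\ref{eqn:radius}) actually tile-cover the rectangle for every admissible speed, and that (\ref{eqn:opcircle}) yields an efficient number of disks -- is external to this statement and is taken as given here.
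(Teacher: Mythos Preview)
Your proposal is correct and follows essentially the same approach as the paper: the paper's proof simply notes that (\ref{eqn:rule_cons}) guarantees the covering disks do not overlap, and since the clearance regions are fully covered by these disks, the regions themselves cannot overlap. Your version is more explicit (spelling out the triangle-inequality step and the interior-versus-boundary distinction arising from the non-strict inequality), but the underlying argument is identical.
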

\begin{proof}

Let $z$ and $z_i$ be the disks with minimum radius $\mathfrak{r}$ and $\mathfrak{r}_i$ from (\ref{eqn:radius}) associated with the clearance regions of ego and instance $i\in S_p$. The constraints in (\ref{eqn:rule_cons}) guarantee that there is no overlap of the disks between vehicle $i\in S_p$ and instance $j\in S_p$. Since the clearance regions are fully covered by these disks, we conclude that the clearance regions do not overlap.
\end{proof}

\section{Software tool and running times}
\label{sec:tool}

We implemented the computational procedure described in this paper in Matlab. The tool allows to load a map represented by a .json file and place vehicles and pedestrians on it. It provides an interface to generate reference/candidate trajectories and it implements our proposed optimal control and P/F frameworks; the $quadprog$ optimizer was used to solve the QPs, the $fmincon$ solver was used to solve the NLPs, and $ode45$ was used to integrate the vehicle dynamics (\ref{eqn:vehicle}). All the computation was performed on a Intel(R) Core(TM) i7-8700 CPU @
3.2GHz$\times 2$. For offline control, it took about 0.04s for each QP, and there were about 400 QPs in each iteration, and each case study we considered in this paper took two or three iterations to find an optimal/feasible solution. For online control, it took about 0.90s for each step (including the NLP (\ref{eqn:mpc}) and the QP (\ref{eqn:cost_online})) to find an optimal control. 
The parameters used in Sec. \ref{sec:case} can be found in \cite{Xiao2021ICCPS}.
%

$\vspace{-2cm}$
\begin{IEEEbiography}[{\includegraphics[width=1in,height=1.25in,clip,keepaspectratio]{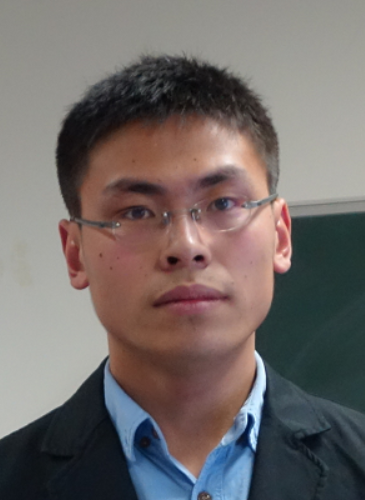}}]{Wei Xiao (S'19)}
received a B.Sc. degree from the University of Science and Technology Beijing, China in 2013 and a M.Sc. degree from the Chinese Academy of Sciences (Institute of Automation), China in 2016. He is currently working toward his Ph.D. degree in Systems Engineering at Boston University, Brookline, MA, USA.

His research interests include control theory, formal methods, and machine learning, with particular emphasis on robotics and traffic control. He received an Outstanding Student Paper Award at the 2020 IEEE Conference on Decision and Control.
\end{IEEEbiography}
$\vspace{-2cm}$
\begin{IEEEbiography}
[{\includegraphics[width=1in,height=1.25in,keepaspectratio]{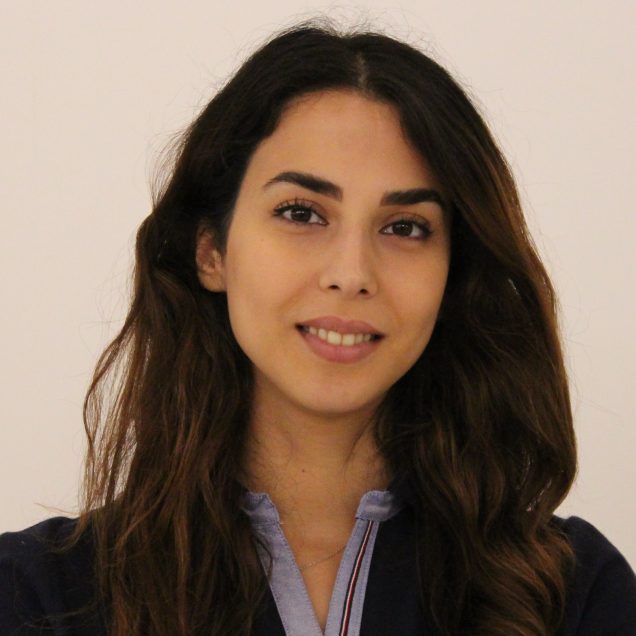}}]{Noushin Mehdipour}
	received a M.Sc. from Amirkabir University of Technology, Tehran, Iran in 2015, and a Ph.D. in Systems Engineering from Boston University, Massachusetts, USA in 2021. Her research interests include formal methods, control theory, machine learning and optimization methods. She is currently working as a research scientist in the Rulebooks team at Motional where she is focused on formal behavior specification, evaluation and control of autonomous vehicles. 
\end{IEEEbiography}
$\vspace{-2cm}$
\begin{IEEEbiography}[{\includegraphics[width=1in,height=1.25in,clip, keepaspectratio]{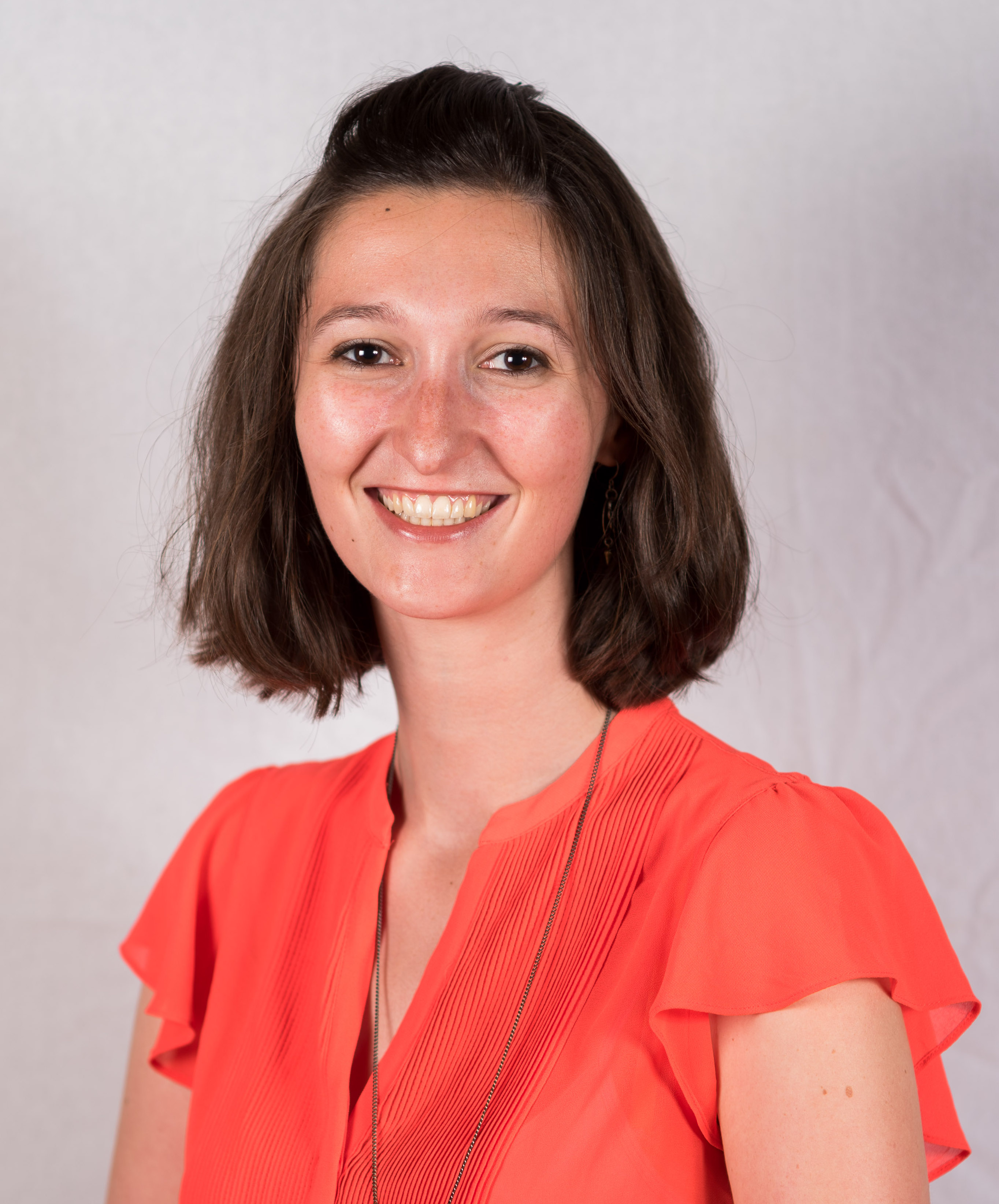}}]{Anne Collin}
received a M.Sc. from Ecole Nationale des Ponts et Chaussees, France in 2016, a M.Sc. in Technology and Policy and a Ph.D. in Aeronautics and Astronautics from the Massachusetts Institute of Technology, Massachusetts, USA in 2016 and 2019. She is a Senior Research Scientist in the Rulebooks team at Motional. Her research interests include complex systems modeling and architecture, optimization, and performance quantification of Artifical Intelligence Systems. Her work received a Best Student Paper Award at the 2018 IEEE ICVES conference.
\end{IEEEbiography}
$\vspace{-2cm}$
\begin{IEEEbiography}[{\includegraphics[width=1in,height=1.25in,clip, keepaspectratio]{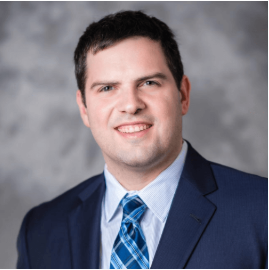}}]{Amitai Bin-Nun}
received a B.S. from Yeshiva University, New York, USA in 2006, an M.A.St from the University of Cambridge, Cambridge, UK in 2007, and a M.A. and Ph.D. from the University of Pennsylvania, Philadelphia, USA, in 2008 and 2010. He is a Senior Research Scientist at Motional, based in Boston, USA. His interests lie at the intersection of artificial intelligence, transportation, and public policy. Amitai's previous roles include service as vice president of a non-profit that developed public policies for autonomous vehicles, and as a science policy fellow at the United States Department of Energy and for United States Senator Chris Coons.
\end{IEEEbiography}
$\vspace{-2cm}$
\begin{IEEEbiography}[{\includegraphics[width=1in,height=1.25in,clip, keepaspectratio]{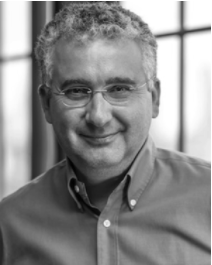}}]{Emilio Frazzoli}
 is currently a Professor of dynamic systems and control with ETH Zürich, Zürich, Switzerland, and the Co-Founder of nuTonomy Inc. He received the Laurea degree in aerospace engineering from the Sapienza University of Rome, Rome, Italy, in 1994, and the Ph.D. degree from the Department of Aeronautics and Astronautics, Massachusetts Institute of Technology, Cambridge, MA, USA, in 2001.,Before joining ETH Zürich in 2016, he held faculty positions at the University of Illinois at Urbana–Champaign, the University of California at Los Angeles, Los Angeles, CA, USA, and the Massachusetts Institute of Technology. His current research interests focus primarily on autonomous vehicles, mobile robotics, and transportation systems.,Dr. Frazzoli was a recipient of the NSF CAREER Award in 2002, the IEEE George S. Axelby Award in 2015, and the IEEE Kiyo Tomiyasu Award in 2017.
\end{IEEEbiography}
$\vspace{-2cm}$
\begin{IEEEbiography}[{\includegraphics[width=1in,height=1.25in,clip, keepaspectratio]{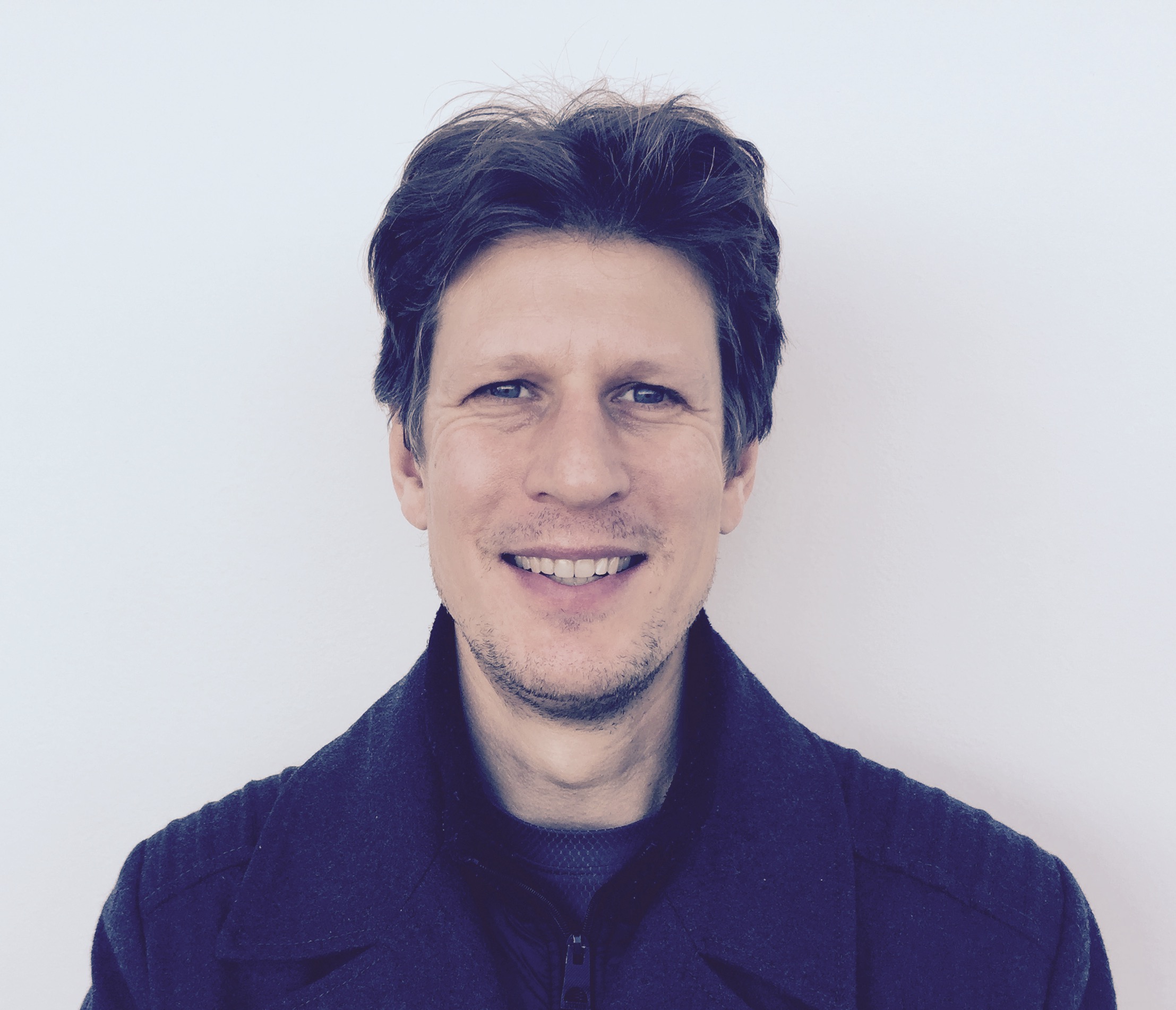}}]{Radboud Duintjer Tebbens}
received a M.Sc. and Ph.D. in Applied Mathematics from Delft University of Technology in the Netherlands. A former Professor of Risk and Decision Analysis, Rad's work focuses on decision making in the context of complex systems involving non-linear dynamics and uncertainty. As a Principal Data Scientist at Motional, he currently leads the Rulebooks team responsible for formal driving behavior specification and assessment and previously helped define the company's autonomous vehicle validation strategy. Prior to joining Motional, as Vice President of non-profit Kid Risk, Inc. he won the INFORMS' Franz Edelman award and the System Dynamics Society's Jay Forrester award for his work  on integrated mathematical models to support the Global Polio Eradication Initiative.
\end{IEEEbiography}
$\vspace{-2cm}$
\begin{IEEEbiography}[{\includegraphics[width=1in,height=1.25in,clip,keepaspectratio]{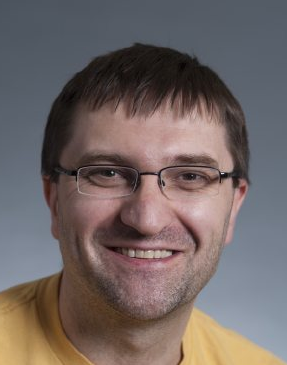}}]{Calin Belta (F'17)}
received B.Sc. and M.Sc, degrees from the Technical University of Iasi, Romania in 1995 and 1997 and M.Sc. and Ph.D. degrees from the University of Pennsylvania, Philadelphia, USA in 2001 and 2003. He is a Professor in the Department of Mechanical Engineering at Boston University, where he is the Director of the BU Robotics Lab. His research focuses on dynamics and control theory, with particular emphasis
on hybrid and cyber-physical systems, formal synthesis and verification, with applications to robotics, autonomous driving, and systems biology. Awards include the Air Force Office of Scientific Research Young Investigator Award, the National Science Foundation CAREER Award, and the 2017 IEEE TCNS Outstanding Paper Award. He is a fellow of IEEE and a distinguished lecturer of the IEEE Control System Society. 
\end{IEEEbiography}





\end{document}